\DeclarePairedDelimiter\floor{\lfloor}{\rfloor}
    \newcolumntype{L}{>{\raggedright\arraybackslash}X}
\newtheorem{lemma}{Lemma}[section]
\title{\LARGE \bf  TOPGN: Real-time Transparent Obstacle Detection using Lidar Point Cloud Intensity for Autonomous Robot Navigation 
}
\author{Kasun Weerakoon, Adarsh Jagan Sathyamoorthy, Mohamed Elnoor, Anuj Zore, and Dinesh Manocha.\\
\small{Supplemental version including Tech Report, and Video at \url{http://gamma.umd.edu/topgn/}}}
\newcommand{\blue}[1]{{\color{black}#1}}
\newcommand{\rev}[1]{\textcolor{black}{#1}}
\begin{document}

\maketitle
\thispagestyle{empty}
\pagestyle{empty}

%%%%%%%%%%%%%%%%%%%%%%%%%%%%%%%%%%%%%%%%%%%%%%%%%%%%%%%%%%%%%%%%%%%%%%%%%%%%%%%%
\begin{abstract}
We present TOPGN, a novel method for real-time transparent obstacle detection for robot navigation in unknown environments. We use a multi-layer 2D grid map representation obtained by summing the intensities of lidar point clouds that lie in multiple non-overlapping height intervals. We isolate a neighborhood of points reflected from transparent obstacles by comparing the intensities in the different 2D grid map layers. Using the neighborhood, we linearly extrapolate the transparent obstacle by computing a tangential line segment and use it to perform safe, real-time collision avoidance. Finally, we also demonstrate our transparent object isolation's applicability to mapping an environment. We demonstrate that our approach detects transparent objects made of various materials (glass, acrylic, PVC), arbitrary shapes, colors, and textures in a variety of real-world indoor and outdoor scenarios with varying lighting conditions. We compare our method with other glass/transparent object detection methods that use RGB images, 2D laser scans, etc. in these benchmark scenarios. We demonstrate superior detection accuracy in terms of F-score improvement at least by 12.74\% and 38.46\% decrease in mean absolute error (MAE), improved navigation success rates (at least two times better than the second-best), and a real-time inference rate ($\sim 50$ Hz on a mobile CPU). We will release our code and challenging benchmarks for future evaluations upon publication.

\end{abstract}

\section{Introduction} \label{sec:intro}

% Cover image
\begin{figure}[t]
    \centering
    \includegraphics[width=0.9\columnwidth,height=5.8cm]{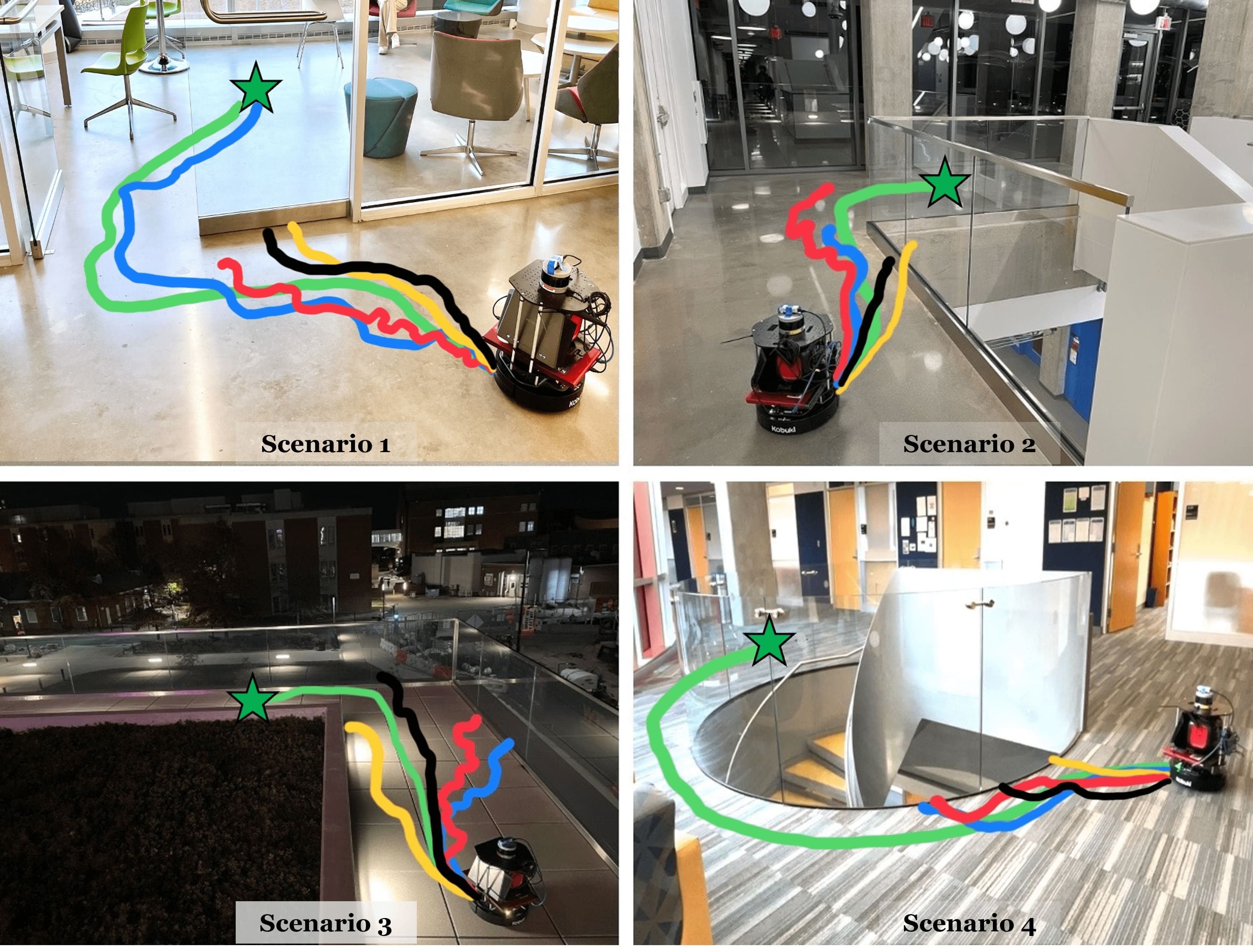}
    \caption{\small{Our method can robustly detect transparent obstacles in scenes with varying illumination in real-time ($\sim 50$ Hz). The figure shows the robot's trajectories in one trial when a planner \cite{fox1997dwa} used our method (in green), GDNet \cite{gdnet} (in blue), Translab \cite{translab} (in red), Glass-SLAM \cite{glass-slam} (in yellow), and 2D laser scans (in black) to detect transparent obstacles in unknown environments. RGB segmentation methods \cite{gdnet,translab} are affected by strong lighting changes, and motion blur in these environments, causing collisions with glass, or freezing during navigation. SLAM methods such as \cite{glass-slam} require $\sim 3$ seconds to update the locally sensed obstacles on to a map, leading to collisions. Our method's accurate detection of transparent and opaque obstacles facilitates safe, collision-free navigation in unmapped, unknown environments.}}
    \label{fig:cover_image}
    \vspace{-10pt}
\end{figure}

% Glass in modern buildings, robots in indoor and outdoor environments
Modern buildings are filled with glass walls, full-height windows, or other transparent obstacles that are challenging for a robot to detect and autonomously navigate around. This is primarily because a majority ($> 90\%$) of the light energy incident on transparent objects gets transmitted through them \cite{visagge} and only a small amount gets reflected for robot-mounted sensors (e.g. RGB, depth cameras, 2D/3D lidars) to detect. The inability of a robot to accurately perceive and avoid transparent obstacles in real-time could lead to serious collisions, which can result in damage to the robot and its environment \cite{visagge,lasitha2022cartographer_glass,gdnet,translab}.  

% Challenges in transparent object detection
% How some existing methods solve it (Image-based)
To address this challenge, there have been several methods for glass/transparent object detection using various sensor modalities such as RGB \cite{gdnet,translab}, depth \cite{depth-aware-glass-surface-detection,rgb-d-depth-completion}, and thermal images \cite{rgb-thermal}, ultrasound \cite{ultrasonic-and-rgbd}, 2D laser scans \cite{glass-slam}, 3D point clouds \cite{mim-arxiv}, etc. Methods utilizing RGB images typically semantically segment transparent objects \cite{gdnet}. Some works have combined RGB images with the depth of objects in the scene, or with the thermal signature \cite{rgb-thermal} of transparent objects to improve detection accuracy. Since RGB cameras and the images they capture are impacted by the environmental lighting conditions, the detection accuracy of RGB-based methods can deteriorate significantly in low-light or extremely bright conditions (see Fig. \ref{fig:cover_image}). This is further exacerbated if the environment contains multiple light sources and highly reflective surfaces.

% Existing lidar-based methods
Conversely, methods that use 2D scans or 3D point clouds from lidars are resilient to external lighting changes, and excessive reflections from various surfaces since lidars have an in-built light source. However, most of the light from lidars passes through transparent obstacles and $< 10\%$ of the intensity is reflected to the sensor at certain angles \cite{visagge}, making detection of the shape and size of the obstacle challenging. In addition, lidar-based methods \cite{visagge,glass-slam,lasitha2022cartographer_glass,foster2023reflectance} have been mainly limited to mapping tasks that typically require a robot to perform several loops in the environment to detect and map transparent obstacles. This makes them impractical for real-time navigation and collision avoidance.

% Our proposed approach and its benefits
\textbf{Main Contributions:} We present TOPGN (Transparent Obstacle Perception for Guidance and Navigation), a novel approach to accurately detect transparent obstacles in the environment, and extrapolate (predict regions where transparent obstacles could be present) them in 2D grid maps to avoid collisions preemptively. The novel components of our work include:

\begin{itemize}
    \item A novel method to accurately isolate \textit{transparent obstacle neighborhoods} (TONs) that arise in a multi-layer 2D grid map representation %\red{add new refs}
    \cite{mip-maps, mim-arxiv, deng2018multi, 723367, hierarchical-occlusion-maps} that are computed from 3D lidar point clouds. Our approach is based on identifying a Gaussian pattern exhibited by point cloud intensities while they strike transparent obstacles. Our approach has a low computational overhead and can execute real-time on mobile CPUs at $\sim 50$ Hz, and can also be used to create a 2D map of an environment containing transparent obstacles. Further, we observe superior transparent obstacle detection accuracy compared to existing RGB-based and lidar-based methods (at least 12.74\% improvement in F-score compared to the second-best performing method).

    \item A novel method to linearly extrapolate transparent obstacles from an instance of a TON such that the regions that could potentially contain transparent obstacles are encompassed. The robot trajectories that avoid extrapolation are guaranteed to avoid transparent obstacles in completely unknown environments. Our method can handle curved transparent obstacles, and we demonstrate that our method leads to superior navigation success rates (at least 2 times better than the second-best method) in reaching the robot's goal.   

    \item We implement our method on a real Turtlebot equipped with a Velodyne VLP16 lidar. We perform extensive real-world evaluation in challenging indoor and outdoor scenarios with severe lighting changes, reflective surfaces, and transparent obstacles with curved shapes. We create and also release our test benchmarks with these scenarios that contain RGB images, and 2D grid maps with transparent obstacle annotations. We demonstrate that our approach is robust in all these scenarios while existing methods deteriorate in terms of detection accuracy and lead to collisions.  
\end{itemize}
\section{Related Work}
\label{sec:rel-work}

In this section, we provide an overview of methods that use point cloud intensities, or some kind of images (RGB, depth, thermal) to perceive transparent obstacles.  

\subsection{Transparent Obstacle Detection using Lidars and Point Clouds}
There have been several approaches \cite{glass-slam,visagge,fusing-sonar,lasitha2022cartographer_glass} that use the intensity of lidar point clouds to detect glass and other transparent obstacles for SLAM (Simultaneous Localization And Mapping) applications. One of the earliest works in this domain is by Foster et al. \cite{visagge}, who proposed to track the angles of incidence on glass from which a lidar's laser rays are reflected with maximum intensity. They identified that $\sim 0^{\circ}$ incidence leads to high-intensity returns back to the lidar. \blue{In a recent extension, Foster et al. \cite{foster2023reflectance} proposed a more general approach that constructs a Reflectance Field Map (RFM) based on the reflectance of points from different perspectives. Next, they identify a distinct H-pattern that transparent obstacles cause in the RFM. \cite{foster2023reflectance} can robustly map glass in real-time even when the lidar is disturbed by bumps and suspension loading, and in the presence of dynamic pedestrians.}

Subsequent works such as \cite{glass-slam} use this theory to recognize the reflected light intensity profile on the glass to detect it, and construct a map using the particle filter. However, it requires several walk-throughs in the environment to map glass, and the resulting map could still miss some portions of glass. Weerakoon et al. \cite{lasitha2022cartographer_glass} improved this accuracy of map building using Graph SLAM. Tibebu et al. \cite{tibebu2021lidar_glass_detection} identified the changes in the distance and intensity measurements between neighboring point clouds to estimate the glass profile. Other works such as Wei et al. \cite{fusing-sonar} have augmented the distance (from the obstacle) information obtained from a lidar with the distance from ultrasound sensors to map environments with glass. However, such methods are limited by the short range of ultrasound sensors making them unsuitable for real-time collision avoidance.

% They identified that $\sim 92\%$ of the light passes through glass, and the rest becomes a part of specular reflection in the direction opposite to the incident angle. Therefore, $\sim 0^{\circ}$ incidence leads to high intensity returns back to the lidar. 

Additionally, such SLAM methods cannot be directly used for real-time navigation in unknown environments since they require several seconds to construct the obstacles in the map. \blue{Our approach is based on the specular reflection at $0^{\circ}$ incidence but extrapolates transparent obstacles in real-time for navigating unknown, unmapped environments with transparent obstacles. Further, we demonstrate our approach's generality by also demonstrating its applicability in real-time mapping.}

% The importance of intensity information in LiDAR point cloud data has been an emerging focus in robotics \cite{di2021visual,sullivan2014fusing,laible20123d}. Some methods investigate the use of intensity information alongside geometric features to enhance point cloud classification methods in outdoor settings \cite{reymann2015improving}. These methods highlight the potential of using the intensity information to provide a better understanding of obstacles, particularly when scene illumination is not consistent. Other methods include the ISHOT descriptor \cite{guo2019local}, which combines geometric and intensity data for improved place recognition.
% Lidar intensity maps have been also used for localization \cite{9129063,wei2017estimating,barsan2020learning}. In \cite{9129063}, the authors present a robust Graph-SLAM framework that improves map accuracy for autonomous vehicles by encoding road surfaces based on LIDAR reflectivity. Moreover, the application of LiDAR intensity in visual navigation tasks has been explored. For instance, \cite{barfoot2016into} introduces a lidar-intensity-image pipeline and demonstrates its performance in visual odometry (VO) and visual teach and repeat (VT\&R) tasks. Lidar intensity maps have also been leveraged for various other applications, including orthoimage generation \cite{shin2021true} and anisotropic surface detection \cite{garestier2014anisotropic}.

\subsection{Transparent Obstacle Detection using Images}
There has been extensive work on glass and transparent object segmentation in RGB, depth, and thermal images. Huang et al. \cite{ultrasonic-and-rgbd} developed a wearable setup with a depth camera and ultrasound sensors to improve the depth measurement accuracy for glass detection to guide the visually impaired in real time. However, due to the low range and fields-of-view of these sensors, they cannot be reliably used for robot navigation. 

More recently, GDNet \cite{gdnet,large-field-contextual} released a large-scale glass detection dataset and proposed a semantic segmentation method for detecting large-sized glass from RGB images using the contextual features from a large receptive field. Similarly, TransLab \cite{translab} proposed using boundary cues as a means to improve large and small transparent objects. Lin et al. \cite{rich-context-aggregation} overcame the inaccuracies in GDNet and TransLab models (e.g. confusing open spaces as glass) by adding a module to refine glass detection by identifying reflections. This was later extended by integrating the missing depth data from glass in a depth image to detect the presence of glass surfaces \cite{depth-aware-glass-surface-detection}. Other methods \cite{rgb-thermal} have fused RGB with thermal images by using the fact that thermal energy is blocked by transparent objects while visible light passes through. 

The challenge posed by transparent objects has also been studied in regards to stereo matching \cite{corner-case-stereo}, object reconstruction and manipulator grasping \cite{6dof-pose-estimation,keypose,clear-grasp,multimodal-transfer-learning,rgb-d-depth-completion,dexnerf,monograspnet}. All these methods use depth from a time-of-flight depth camera \cite{6dof-pose-estimation}, a stereo camera \cite{keypose}, or fuse the slight discoloration observed in RGB images with depth information \cite{multimodal-transfer-learning} to detect graspable small 3D objects, assess their position and orientation, and plan a way to grasp them. Using neural radiance fields \cite{dexnerf} and additional lights to obtain more reflections to improve detections have also been proposed for this task. 

However, methods that use RGB, RGB-D, and other types of cameras suffer from several key downsides that make them inappropriate for real-time navigation. RGB/RGB-D cameras suffer from low range and fields-of-view compared to lidars. Further, the quality of images deteriorates as the environmental illumination sharply increases or decreases. Most of the segmentation methods for glass detection get confused by reflective surfaces and tend to classify them as glass. This could severely restrict a robot's notion of navigable free space causing undesirable behaviors such as halting/freezing \cite{frozone}.

% Object detection in indoor settings has been widely studied for numerous applications including robot navigation, mapping, and computer graphics. Popular solutions in the literature include vision-based object detection and semantic segmentation approaches due to the structuredness of indoor environments. Moreover, the generation of necessary image datasets is feasible due to the limited diversity of indoor objects. However, detecting non-opaque objects such as glass remains a formidable challenge for vision-based systems due to the lack of visual clues. The method in \cite{mei2020glass_vision} proposes GDNet, a glass detection network that identifies abundant contextual cues for glass detection using a large-field contextual feature integration (LCFI) module.

\subsection{Multi-Layer Representations}
For robot navigation, 2D grid/cost maps \cite{qi2020building, tripathy2021care, de2017skimap} %\red{add refs} 
have been used as a standard data structure to represent the distribution of obstacles, and navigation costs in an environment. The robot's planner uses the costs in these maps to compute a least-cost, collision-free path or velocity to navigate to its goal. Multi-layer Image Representations have been widely used for image processing tasks such as instance retrieval \cite{deng2018multi}, image compression \cite{723367}, and interpretation \cite{ivasic2014multi}. Other
Multi-layer, hierarchical representations such as MIP-maps \cite{mip-maps}, hierarchical occlusion maps \cite{hierarchical-occlusion-maps}, quad trees \cite{quad-trees,linear-quad-trees}, multi-layer intensity maps \cite{mim-arxiv} have existed that use several layers of grid arrays to represent various applications in graphics rendering, and obstacle detection. Our approach reduces the dimensions of point cloud intensities to 2D using \cite{mim-arxiv}, and uses it for transparent obstacle detection and navigation. 
\section{Background} \label{sec:background}
In this section, we first define the symbols and notations used in our work, then explain the underlying intensity maps and the preliminary concepts used to detect transparent obstacles.
\subsection{Definitions and Assumptions}
We make the following assumptions in our formulation for transparent obstacle detection. We assume that a robot modeled as a cylinder of radius $r_{rob}$, and height $h_{rob}$ is equipped with a lidar mounted at height $h_{lid}$ (and $h_{lid} \le h_{rob}$) that shoots out light rays and generates 3D point clouds (PC) with their associated intensities. For simplicity, we assume that the robot's and the lidar's centers coincide. Each point in the point cloud is represented as $\mathbf{p} = \{x, y, z, int\}$, where $x, y, z$ denote the point's location relative to the lidar, and $int \in [0, i_{max}] $ denotes its intensity, and $i_{max}$ denotes the maximum possible intensity. 
Our coordinate frame convention is defined with the positive x, y, and z axes pointed forward, leftward, and upward respectively attached to the ground ($z = 0$) beneath the robot's center of mass. Throughout the text, symbols $j, k$ are used to denote indices, and $t$ denotes a time instant. 
% Our method's overall system architecture is shown in fig. \ref{fig:sys-arch}. 

\begin{figure*}[t]
    \centering
    \includegraphics[width=0.8\linewidth, height=8cm]{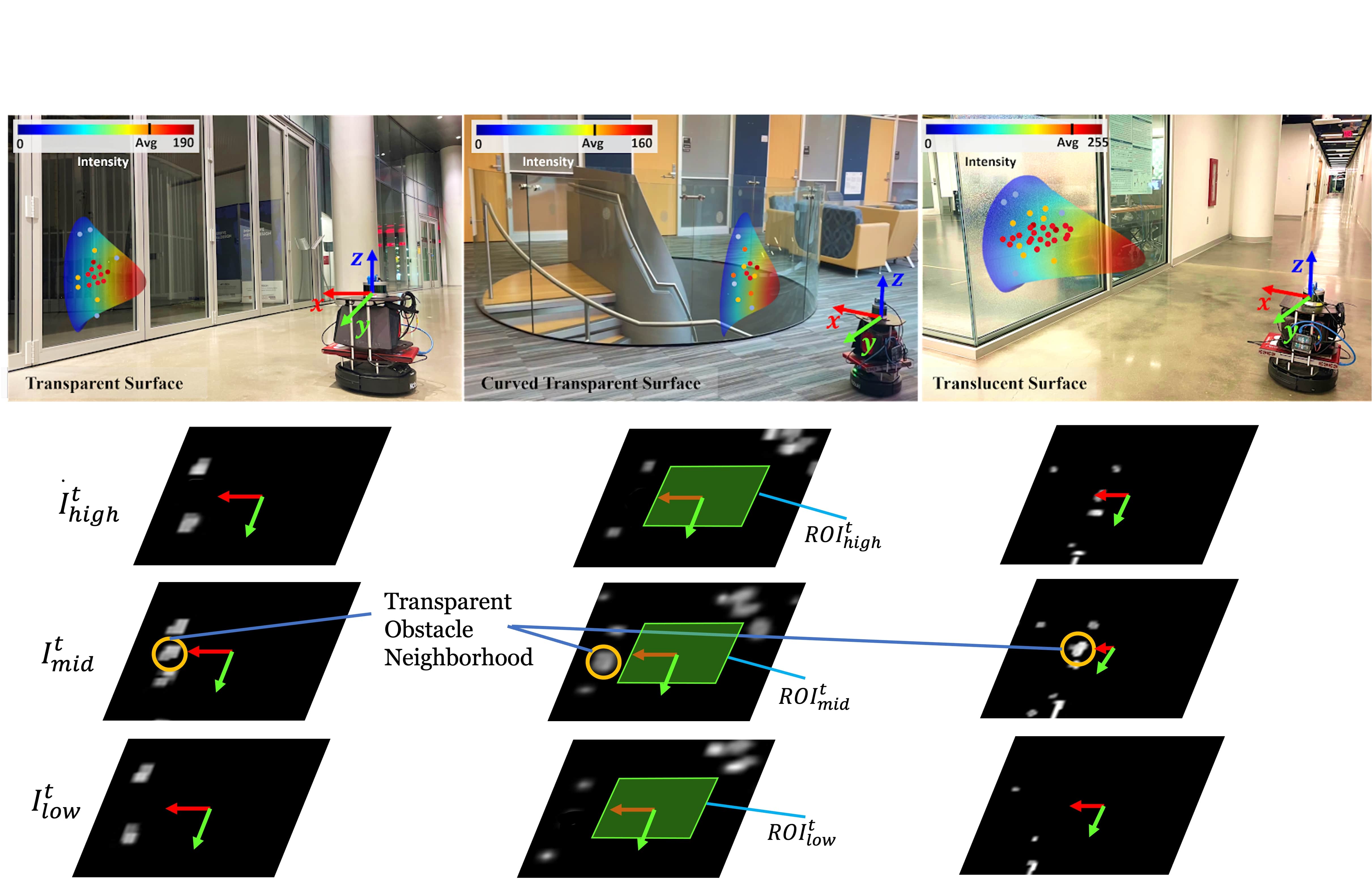}
    \caption{\small{The Gaussian distribution of point cloud intensities observed when the robot faces obstacles with different levels of transparency and shapes. The average intensities along with the distribution depend on the transparency (high transparency leads to lower average intensity), and shape (high curvature leads to lower average reflected intensity). The corresponding intensity maps at three different height ranges defined by equation \ref{eqn:3-layers} are shown, and the peak intensity neighborhood reflected from the glass appears in $I^t_{mid}$ due to our definitions of its range, and is highlighted by the yellow circle. Our formulation detects this pattern and linearly extrapolates the transparent obstacle's shape from it to safely navigate unknown environments. The green parallelograms show regions of interest which are defined by equation \ref{eqn:roi_definition}.}}
    \label{fig:gaussian}
    \vspace{-10pt}
\end{figure*}
\subsection{Light Intensity and Transparent Obstacles}
The intensity measures the amount of light energy that is reflected back to the lidar from any object in the environment. Typically, points reflected from diffuse surfaces (opaque, smooth, and planar/uncurved) satisfy $int \approx i_{max}$. This is because the opacity ensures that most of the light rays do not get transmitted through the object, and the smoothness and planarity ensure that they do not scatter away from the lidar. Further, $int$ depends on an object's proximity to the lidar, and the angle of incidence of the light. For transparent obstacles especially, the highest intensity is observed at $0^{\circ}$ incidence \cite{visagge}.

To detect transparent obstacles from point clouds and navigate, our approach uses multiple layers of 2D grid maps obtained by projecting the intensity of point clouds belonging to certain height intervals $H_j$ along the z-axis. The multiple layers together form a Multi-layer Intensity Map \cite{mim-arxiv}. In which, each 2D layer has dimensions $n \times n$, and $(n/2, n/2)$ denotes the robot/lidar's position w.r.t the map. Each grid is represented by its row and column coordinate $(r, c)$. Formally, a single layer at time $t$ is constructed from points belonging to an interval $H_j$, denoted as $I^t_{z \in H_j}$ is defined as, 

\begin{equation} \label{eqn:single-layer}
\begin{split}
    I^t_{z \in H_j}(r, c) &= \frac{\sum_{x}\sum_{y} int}{s^2} \\ 
    \forall \,\, x &\in [x_{low}, x_{low} + s], \,\, y \in [y_{low}, y_{low} + s], \\ 
    x_{low} &= \floor*{(r - \frac{n}{2}) \cdot s} \,\,\, \text{and} \,\,\, y_{low} = \floor*{(c - \frac{n}{2}) \cdot s} \\
    H_j &= [low_j, high_j], \,\, low_j \le high_j.
\end{split}
\end{equation}

Here, $s$ denotes the side length of the real-world square that each grid $(r, c)$ represents in $I^t_{z \in H_j}$ that is computed continuously for every time instant.  
\section{TOPGN: Transparent Obstacle Perception}
In this section, we explain how transparent obstacles are detected using the layers of the intensity maps. Next, we explain how we isolate a Transparent Obstacle Neighborhood (TON) in real-world scenarios, and linearly extrapolate the transparent obstacle shape for autonomously navigating in their presence. 

% To balance transparent obstacle detection capability and computational efficiency, our approach uses a  

\subsection{Transparent Obstacle Intensity Distribution} \label{sec:gaussion-pattern} 

The intensities of point clouds incident on transparent obstacles along a horizontal plane as a function of the incident angle can be approximated as a Gaussian curve \cite{lasitha2022cartographer_glass,fusing-sonar}. Due to symmetry, this also holds true along the vertical/longitudinal plane as depicted in Fig. \ref{fig:gaussian}. The points with the peak intensity occur near the location where the angle of incidence is $\sim 0^{\circ}$ \cite{visagge} at height $h_{lid}$, and the intensity dissipates for points farther from this center. This Gaussian pattern of point cloud intensities is observed for all transparent obstacles taller than $h_{lid}$. To detect this pattern efficiently and use it for navigation, we use a three-layered intensity map $I^t_{3L} = [I^t_{low} | I^t_{mid} | I^t_{high}]$ that stacks three 2D grid maps ($I^t_{low}, I^t_{mid}, I^t_{high}$). Each of these layers is defined according to equation \ref{eqn:single-layer} by the limits in the $z$ axis specified as,
\begin{equation} \label{eqn:3-layers}
\begin{split}
    I^t_{low} &: z \in (0, h_{lid} - \Delta) \\
    I^t_{mid} &: z \in (h_{lid} - \Delta, h_{lid} + \Delta) \\
    I^t_{high} &: z \in (h_{lid} + \Delta, h_{lid} + 2\Delta ).
\end{split}
\end{equation}

Here, $\Delta$ is a height parameter that controls the number of points that are projected on to $I^t_{mid}$. It is chosen empirically such that all the points with the highest intensity lie within $h_{lid} \pm \Delta$ when the lidar is $d_{thresh}$ meters (threshold distance to maintain with obstacles) away from a completely transparent obstacle. Our definition of these layers leads to $I^t_{mid}$ registering a prominent region with high intensities while $I^t_{low}$, and $I^t_{high}$ register lower intensity values $\approx 0$ in the same grid position $(r, c)$ as shown in Fig. \ref{fig:gaussian}. We refer to any region that satisfies this condition as a \textit{Transparent Obstacle Region} (TON).

\subsection{Transparent Obstacle Isolation} \label{sec:isolation}
To isolate a TON from $I^t_{3L}$, we formulate the following condition,
\vspace{-10pt}
\begin{equation} \label{eqn:TO-condition}
    \mathcal{G}^t(r, c) = 
    \begin{cases}
    1 \,\,\, &\forall \{(r, c) \,\,\, s.t. \,\,\, ROI^t_{mid}(r, c) \in \mathbf{R}, \text{and} \\
    & ROI^t_{low}(r, c) < max(\mathbf{R})/3, \text{and} \\
    & ROI^t_{high}(r, c) < max(\mathbf{R})/3 \} \\
    0 \,\,\, &\text{Otherwise}.
    \end{cases}
\end{equation}

Here, $\mathbf{R}$ denotes a range of intensities, and $ROI^t_{mid}, ROI^t_{low}, ROI^t_{high}$ represent an $m \times m$ ($m < n$) regions of interest defined in the corresponding intensity maps as, 
% \vspace{-10pt}
\begin{multline}
    ROI^t_{low/mid/high} = \{I^t_{low/mid/high}(r, c) | \\
    r, c \in [\frac{n}{2} - \frac{m}{2},  \frac{n}{2} + \frac{m}{2}]\}.
    \label{eqn:roi_definition}
\end{multline}

The isolated ROIs contain minor artifacts due to noise which are filtered out. \blue{Our filtering approach is based on identifying the contours of all the regions with 1's, and removing the ones with low areas.} We use ROIs to further reduce computation costs. They are visually represented in green in Fig. \ref{fig:gaussian}. $\mathcal{G}^t$ is an $m \times m$ grid map that contains only the grids belonging to various TONs (that contain value 1) at any time instant $t$.

\subsection{Transparent Obstacle Extrapolation} \label{sec:extrapolation}
$\mathcal{G}^t$ only indicates the presence of a transparent obstacle (see Fig. \ref{fig:mapping}) at a time instant and does not represent its true shape, which is required to avoid collisions during navigation. Therefore, we propose a method to linearly extrapolate the transparent object based on the $j^{th}$ transparent obstacle neighborhood $TON_j$ in $\mathcal{G}^t$. To this end, we first compute the centroid grid for $TON_j$ as  $(r^j_{cen} , c^j_{cen}) = (\sum{r}/size(TON_j), \sum{c}/size(TON_j)) \,\, \forall \,\, r, c \,\, s.t. \,\,  \mathcal{G}^t(r, c) \\= 1$, where $size(TON_j)$ returns the number of grids in $TON_j$. Next, bounding circles $C^j_{bound}$ centered at $r^j_{cen}, c^j_{cen}$, and radius equal to the distance from the centroid to the farthest point in $TON_j$ as shown in Fig. \ref{fig:glass-extrapolation} are computed.   

Now, let us consider a light ray in 3D that is incident at $\sim 0^{\circ}$ on a transparent surface. It is by definition perpendicular to the tangent to the surface at the point of incidence (see Fig. \ref{fig:glass-extrapolation}a). The line equation of such a light ray in 2D, relative to $\mathcal{G}^t$ can be obtained by connecting the position of the lidar, and the centroid of a TON (see Fig. \ref{fig:glass-extrapolation}). Then, the vectors of the incident light ray, and the tangent line perpendicular to it can be represented as,
\begin{equation}
\begin{split}
    \mathbf{light}^j &= [r^j_{cen} - m/2, c^j_{cen} - m/2]^\top, \\
    \mathbf{tangent}^j &= [c^j_{cen} - m/2, -(r^j_{cen} - m/2)]^\top.
\end{split}
\end{equation}

Let the point of intersection of the $\mathbf{light}$ ray with the corresponding $C^j_{bound}$ be $(r^j_{int}, c^j_{int})$. We extrapolate the tangent line segment from $(r^j_{int} , c^j_{int})$ on either direction by the robot's radius $r_{rob}/s$ grids as, 
\begin{equation}
    E^j = \{ [r^j_{int}, c^j_{int}] \pm \frac{r_{rob}}{s} (\frac{\mathbf{tangent}^j}{\lVert \mathbf{tangent}^j \rVert}) \},
\end{equation} 

% The line perpendicular to the light ray is tangential to an arbitrarily-shaped/curved transparent obstacle's surface, and parallel to a planar/uncurved transparent obstacle's surface. 

We extrapolate by the robot's radius on either side to minimize the amount of free space that is considered an obstacle by the robot's planner. Line segment $E^j$ is considered as a half-plane beyond which the robot should consider an obstacle region and avoid. This is depicted in Fig. \ref{fig:glass-extrapolation}. All grids corresponding to vectors and line equations are integerized. We omit this in the equations for readability. Finally, we obtain a grid map containing all the extrapolated TONs, and refer to it as $\mathcal{G}^t_{extrap}$. It is defined as $\mathcal{G}^t_{extrap}(r, c) = 1 \,\, \forall (r, c) \in E_j \,\, \forall j$. 
\begin{figure}[t]
    \centering
    \includegraphics[width=0.9\columnwidth,height=4.5cm]{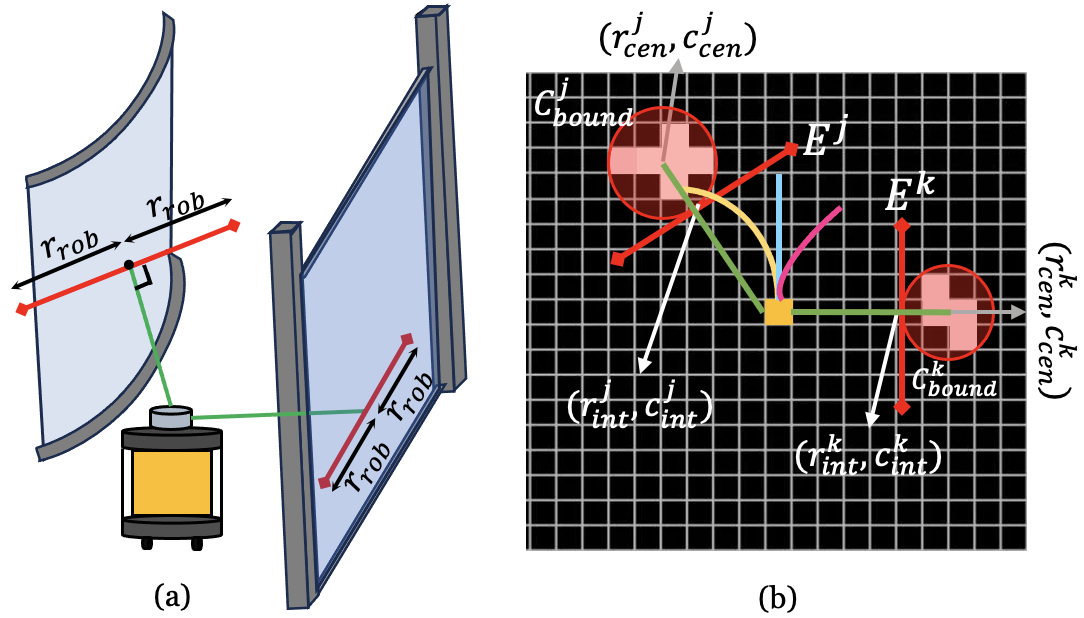}
    \caption{\small{(a) The green lines depict the light ray that is incident at $0^{\circ}$ on a transparent obstacle (in blue). The line segment perpendicular to this light ray is represented in red and is extended from the point of incidence on either side by the radius of the robot $r_{rob}$. (b) The same scenario on $\mathcal{G}^t$ where the light ray connects the robot (in yellow) with $(r^j_{cen}, c^j_{cen})$. The red lines are extended on either side by $r_{rob}/s$ grids in red. All grids on the other side of the red lines are considered as obstacles for time instant $t$. A few of the robot's instantaneous candidate trajectories when integrated with a planner \cite{fox1997dwa} are shown in blue, yellow, and pink. An optimal trajectory is chosen based on its distance away from obstacles, and the progress/heading towards the goal. In this scenario, the pink trajectory is preferred over the others as it is away from obstacles.}}
    \label{fig:glass-extrapolation}
    \vspace{-15pt}
\end{figure}

\subsection{Collision Avoidance}
Our transparent obstacle extrapolation can be integrated with any velocity/trajectory planning method \cite{fox1997dwa,time-elastic-band} that evaluates navigation \textit{costs} for the robot's candidate trajectories based on their proximity to obstacles, and the robot's goal. We use the work by Fox et al. \cite{fox1997dwa} along with our map layers, and $\mathcal{G}^t_{extrap}$ to navigate unknown environments with transparent obstacles. To first obtain a complete representation of all the obstacles (opaque and extrapolated transparent obstacles) in the environment, we obtain a grid map that can be used for navigation as $I^t_{nav}$,
\begin{equation} \label{eqn:nav-grid-map}
\begin{split}
    I^t_{nav} &= I^t_{low} + I^t_{mid} + I^t_{high} \\
    I^t_{nav}(B, B) &= I^t_{nav}(B, B) +  \mathcal{G}^t_{extrap} \\
    B &= \big(\frac{n}{2} - \frac{m}{2} : \frac{n}{2} + \frac{m}{2}\big). 
\end{split} 
\end{equation}

In equation \ref{eqn:nav-grid-map}, $I^t_{nav}(B, B)$ represents an $m \times m$ subset similar to the green ROI in Fig. \ref{fig:gaussian} where the extrapolated transparent obstacles are added. $I^t_{nav}$ represents \textit{all} environmental obstacles and can be used by \cite{fox1997dwa} to evaluate the costs for candidate trajectories and compute the least cost trajectory for the robot to follow. Please refer to the supplementary document for additional details on the planner.

\begin{figure}[t]
    \centering
    \includegraphics[width=\columnwidth,height=4.3cm]{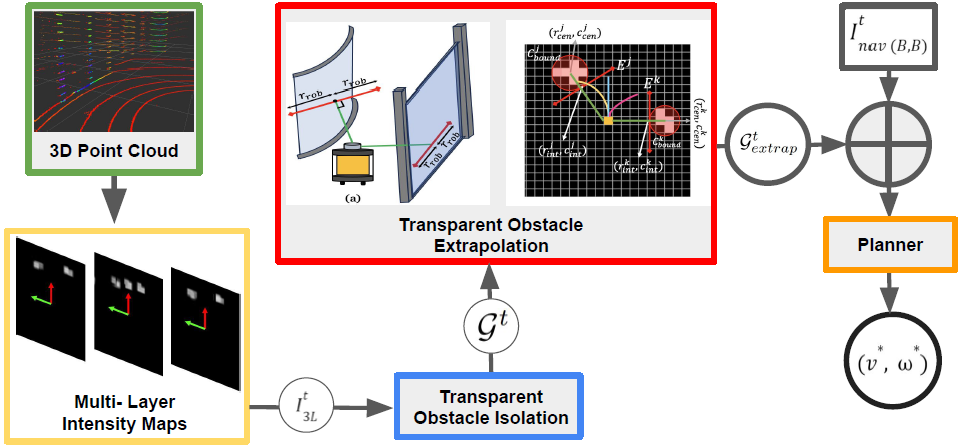}
    \caption{\small{TOPGN's overall system architecture. Three-layered intensity map $I^t_{3L}$ is extracted from the 3D lidar point cloud to isolate the transparent obstacles. The linear extrapolation is performed on isolated obstacles to estimate the true shape of the obstacles for collision avoidance. The extrapolated transparent obstacles are combined with the navigation cost map $I^t_{nav}$ to generate collision-free and goal-reaching actions from the planner. This overall framework demonstrates superior real-time transparent obstacle detection capabilities compared to state-of-the-art vision-based and lidar-based approaches. }}
    \label{fig:sys-arch}
    \vspace{-15pt}
\end{figure}

\begin{lemma}
    A candidate robot trajectory that does not intersect with any line segment $E^j$ during navigation at time instant $t$ guarantees collision avoidance with every transparent obstacle in the robot's vicinity at that instant. 
\end{lemma}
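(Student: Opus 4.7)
The plan is to reduce the lemma to a geometric statement about separating half-planes and the Minkowski inflation of the transparent obstacle by the robot's radius. Fix a time instant $t$ and let $\mathcal{O}_j$ denote the true transparent obstacle surface associated with the $j$-th neighborhood $TON_j$ in $\mathcal{G}^t$. First I would invoke the specular-reflection property established in Section~\ref{sec:gaussion-pattern}: the centroid $(r^j_{cen}, c^j_{cen})$ of $TON_j$ coincides, up to grid resolution $s$, with the unique point of $\mathcal{O}_j$ struck by the lidar ray at $\sim 0^{\circ}$ incidence. It follows that the outward surface normal of $\mathcal{O}_j$ at $(r^j_{int}, c^j_{int})$ is parallel to $\mathbf{light}^j$, so the line through $(r^j_{int}, c^j_{int})$ in direction $\mathbf{tangent}^j$ is, by definition of tangency, the first-order approximation of $\mathcal{O}_j$ at that point.

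Next I would show that $\mathcal{O}_j$ is contained in the closed half-plane $H^j$ bounded by the line through $E^j$ and lying on the side opposite to the robot's center $(m/2, m/2)$. For locally planar obstacles this is immediate from tangency. For obstacles that are convex toward the lidar, the normal-incidence property together with local convexity confines the relevant portion of $\mathcal{O}_j$ to $H^j$ within the lateral span of $E^j$; this is precisely why $E^j$ is designed as a bounded segment of length $2 r_{rob}/s$ rather than an infinite tangent line.

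To account for the finite extent of the robot I would then model it as a disk of radius $r_{rob}$ and note that avoiding collision with $\mathcal{O}_j$ at instant $t$ is equivalent to keeping the robot's center outside the Minkowski sum $\mathcal{O}_j \oplus D(r_{rob})$. Because $E^j$ is already extended by $r_{rob}/s$ grid cells on either side of $(r^j_{int}, c^j_{int})$, a short calculation (translating $H^j$ by $r_{rob}$ toward the robot along $\mathbf{light}^j$) shows that the part of $\mathcal{O}_j \oplus D(r_{rob})$ lying in the ROI is still contained in the translated half-plane whose boundary passes through the endpoints of $E^j$. Hence any candidate trajectory whose swept center path does not cross $E^j$ remains entirely in the complement of the inflated obstacle, and therefore cannot collide with $\mathcal{O}_j$ at time $t$. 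Applying this argument independently to every $j$ then yields the lemma.

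The main obstacle to making this rigorous is the curved-obstacle case. One must either assume local convexity of $\mathcal{O}_j$ on the side facing the lidar, or impose a mild Lipschitz-type bound on the curvature of $\mathcal{O}_j$ over a $2 r_{rob}$-neighborhood of $(r^j_{int}, c^j_{int})$, so that the inflated tangent segment still dominates the true surface inside the robot's footprint. Without such a hypothesis a surface with curvature radius much smaller than $r_{rob}$ could bulge past $E^j$ laterally, and only a local rather than a uniform guarantee would follow. Such an assumption is consistent with the claim in the contributions that TOPGN handles curved transparent obstacles and with the empirical behaviour observed in Section~\ref{sec:isolation}, and appears to be the natural regularity condition under which the lemma holds cleanly.
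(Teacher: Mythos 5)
Your proposal follows the same overall route as the paper's proof: normal ($\sim 0^{\circ}$) incidence at the closest point of each unoccluded, smooth obstacle guarantees that a TON is registered in $\mathcal{G}^t$, the tangent segment $E^j$ then separates the robot from that closest point, and the $r_{rob}/s$ extension accounts for the robot's footprint. Where you go further is in making the separation step precise: the paper simply asserts that ``by construction, the closest point(s) in the $j^{th}$ transparent obstacle are contained beyond $E^j$ when viewed from the lidar'' and concludes $traj^I_{k} \cap E^j = \emptyset \implies traj^I_{k} \cap TON_j = \emptyset$, whereas you formalize this as containment of the obstacle in a closed half-plane together with a Minkowski inflation by the robot's disk. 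Most importantly, you explicitly isolate the hypothesis the argument actually needs --- local convexity of the surface toward the lidar, or a curvature bound over the $2r_{rob}$ lateral span of $E^j$ --- without which a sharply curved obstacle could bulge laterally past the finite segment and the implication would fail. The paper does not state this in the proof itself; it only concedes in its later discussion of failure cases that the construction ``approximates the transparent object as convex at the TON'' and that the guarantee breaks for sharply non-convex or tilted glass. So your proof is correct under the regularity assumption you name, that assumption is exactly the one the paper relies on implicitly, and flagging it is a strengthening of the paper's argument rather than a deviation from it.
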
  
\begin{proof}
    Let us consider an environment with transparent obstacles of which $K$ unoccluded 3-dimensional, smooth (i.e., their shape is a differentiable curve) transparent obstacles lie within the lidar's sensing range. There exist $K$ closest points on these obstacles from the lidar. The lines connecting the lidar and these closest points are normal to the transparent obstacle and will lead to corresponding transparent obstacle neighborhoods (TONs) in $\mathcal{G}^t$. Therefore, all the transparent obstacles satisfying smoothness, and 3D assumptions can be detected in $\mathcal{G}^t$ at time instant $t$. This ensures that there exists a corresponding tangent line segment $E^j$ that can be extrapolated for the $j^{th}$ transparent obstacle neighborhood in $\mathcal{G}^t$ as shown in Fig. \ref{fig:glass-extrapolation}b. We refer to the map with the extrapolated line segments as $\mathcal{G}^t_{extrap}$.

    By construction, the closest point(s) in the $j^{th}$ transparent obstacle are contained beyond $E^j$ when viewed from the lidar. Let $traj^I_{k}$ denote the $k^{th}$ candidate trajectory represented as a set of row and column coordinates relative to the intensity map.  If $traj^I_{k} \cap E^j = \emptyset \implies traj^I_{k} \cap TON_j = \emptyset$, guaranteeing collision avoidance with transparent obstacles at instant $t$.
\end{proof}

\blue{Our approach of isolating (section \ref{sec:isolation}) and extrapolating (section \ref{sec:extrapolation}) transparent obstacles allows a robot to avoid collisions with them in completely unknown environments without any prior mapping. Next, we highlight the generality of our TON isolation by demonstrating how it can also be used for mapping transparent obstacles in real-time.}

\subsection{Application to Mapping}
\blue{In this section, we explain how our transparent object isolation in various instances of $\mathcal{G}^t$ can be used for mapping the environment in 2D in a single walk-through.} To this end, the previous instances of the neighborhood ($\mathcal{G}^{t-1}, \mathcal{G}^{t-2}, ... , \mathcal{G}^{t-t_{past}}$) transformed relative to the present time instant $t$ are added to the current $I^t_{mid}$ to construct the true shape of a transparent obstacle. Here, $t_{past}$ is the number of past instances considered for mapping the transparent obstacle. To transform the position of obstacles in $\mathcal{G}^{t-k}$ relative to $I^{t}_{mid}$, we use transformation matrices $T^{t-k}_t$ computed based on the robot's motion between instances $t - k$ to $t$. That is,
\begin{equation} \label{eqn:mapping}
\begin{split}
    \mathcal{G}^{t-k}_t &= T^{t-k}_t \cdot \mathcal{G}^{t-k}, \\
    I^{t}_{mid}(B, B) &= I^{t}_{mid}(B, B) + \mathcal{G}^{t-k}_t \,\,\, \forall k \in [1, t_{past}] \\
\end{split} 
\end{equation}

This addition is depicted in Fig. \ref{fig:mapping}. Finally, to map all obstacles, we perform $I^t_{mapping} = I^t_{low} + I^t_{mid} + I^t_{high}$.

\begin{figure}[t]
    \centering
    \includegraphics[width=0.7\columnwidth,height=4.5cm]{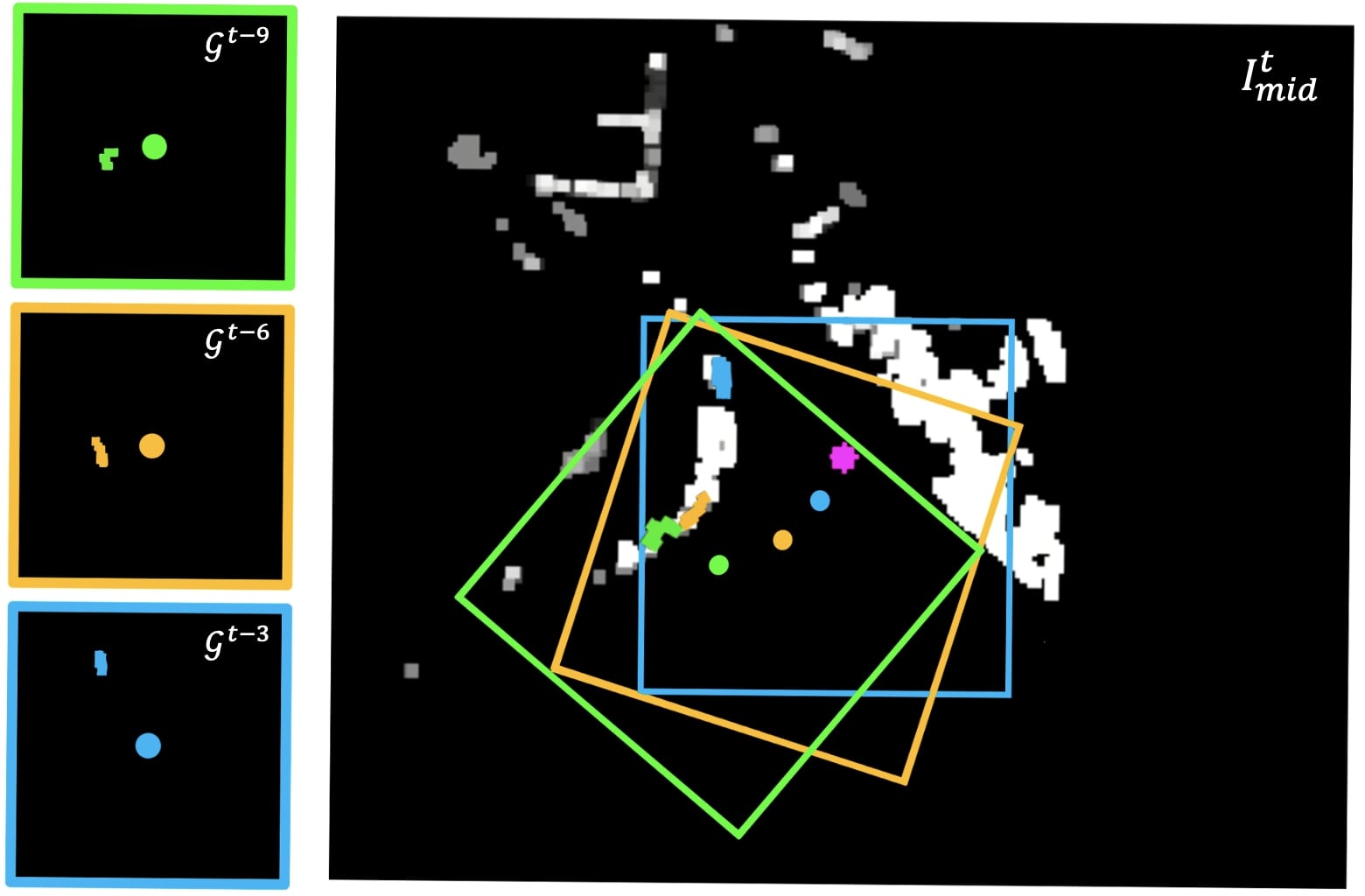}
    \caption{\small{$I^t_{mid}$ for the curved glass scenario shown in Fig. \ref{fig:gaussian} [center], and the transformed $\mathcal{G}^{t-3}$ (blue square), $\mathcal{G}^{t-6}$ (yellow square), and $\mathcal{G}^{t-9}$ (green square) added to it. The centers of these squares depict the robot's movement as time progresses. This addition reconstructs the true shape of the transparent obstacle for mapping.}}
    \label{fig:mapping}
    \vspace{-15pt}
\end{figure}

% 2. $I^t_{mid}$ (which is a local cost map) can be extrapolated to create a global map that accurately represents transparent obstacles. 
\section{Results and Experiments}
In this section, we explain TOPGN's implementation, and real-world experiments/evaluation, and demonstrate its advantages for autonomous navigation.  

\subsection{Hardware and Software Implementation}
We implement our proposed approach on a Turtlebot 2 robot equipped with a Velodyne VLP16 lidar, and a laptop with an Intel i7 CPU and NVIDIA RTX 3060 GPU. The robot is also equipped with an Intel Realsense d435 camera to collect images to compare with RGB segmentation methods. We use the following parameters in the implementation: $n = 200, m = 80, \mathbf{R} = [100,130], h_{rob} = h_{lid} = 0.5m, r_{rob} = 0.3m, \Delta = 0.2m$.

% \subsection{Comparison Methods, Testing Scenarios, and Metrics}
\subsection{Evaluations}
\textbf{Comparison Methods}: We compare TOPGN with two types of methods for glass, and transparent obstacle detection: 1. Semantic segmentation methods that use RGB images, and 2. SLAM methods that use lidar 2D scans or 3D point clouds. We use the following semantic segmentation methods: GDNet \cite{gdnet}, TransLab \cite{translab}, Mirror-Net \cite{mirrorNet}, RGB-T segmentation \cite{rgb-thermal}, and the following mapping methods: Gmapping \cite{grisetti2007gmapping}, Glass-SLAM \cite{glass-slam}, and Glass-Cartographer \cite{lasitha2022cartographer_glass}. Gmapping \cite{grisetti2007gmapping} is used as a baseline to indicate the challenges in detecting glass.  

\textbf{Test Dataset}: To evaluate both types of methods in a uniform manner, we treat transparent obstacle detection as a segmentation problem and create a test dataset with three sets of inputs and masks. The first set of masks outlines transparent obstacles on RGB images. The second set outlines them on local square segments cropped from the global 2D grid maps computed by Gmapping \cite{grisetti2007gmapping}, Glass-SLAM \cite{glass-slam}, and Glass-Cartographer \cite{lasitha2022cartographer_glass}. The third set marks transparent obstacles on local 2D grid maps to evaluate TOPGN. We use TOPGN to create a local map of obstacles (similar to Fig. \ref{fig:mapping}), and evaluate its detection capabilities based on that map. The test set is collected in various environments with sharp lighting changes, different levels of transparency, color, textures, and shapes in transparent obstacles (see Figs. \ref{fig:cover_image}, \ref{fig:various-env}, supplementary material).  

\begin{figure*}[t]
    \centering
    \includegraphics[width=\linewidth]{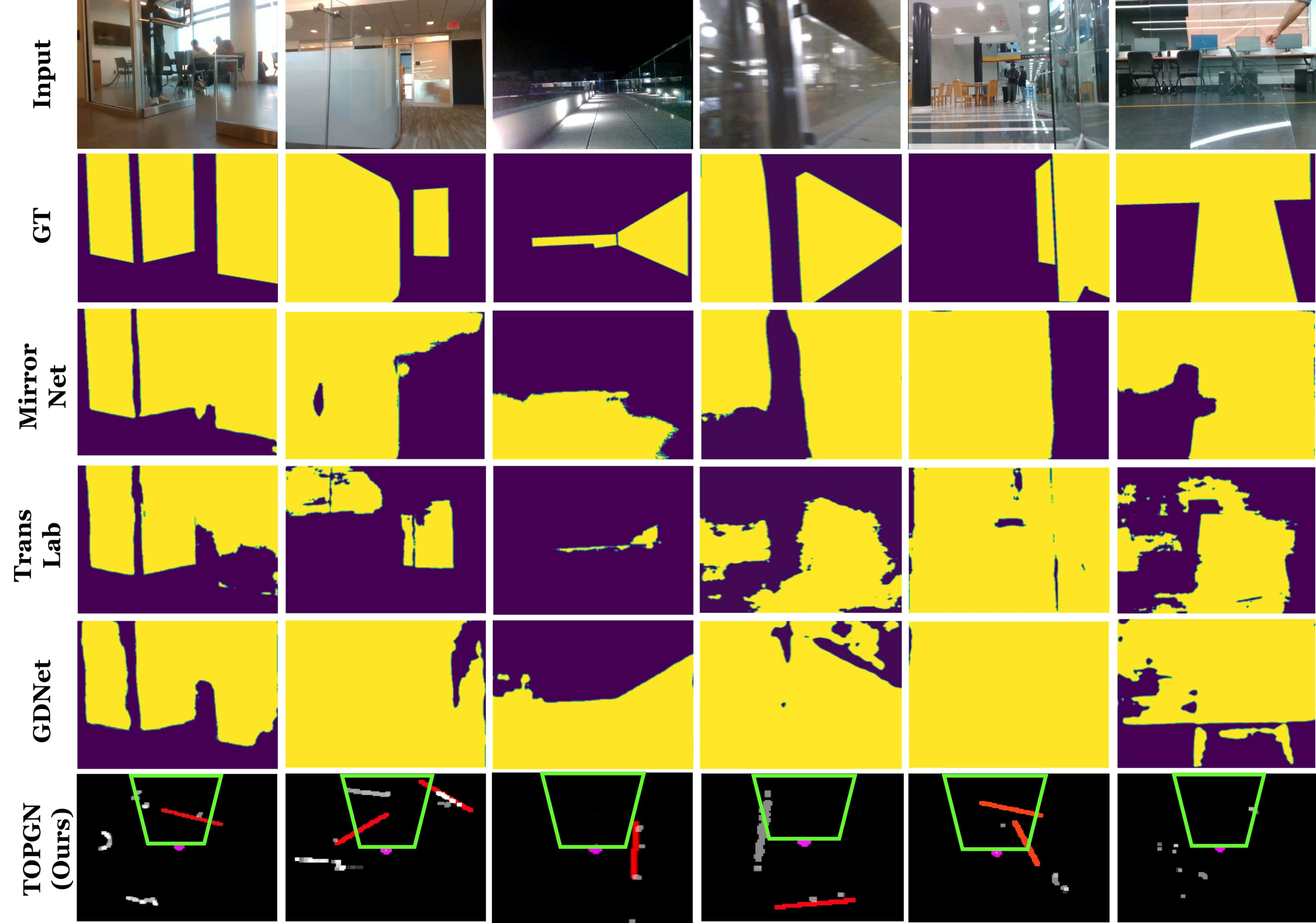}
    \caption{\small{\rev{The glass detection outputs (yellow indicates transparent obstacle) of MirrorNet \cite{mirrorNet}, TransLab \cite{translab}, and GDNet \cite{gdnet} in a few images from our test benchmarks, which contains scenarios with strong lighting changes (columns 1, 3), reflections (column 5), motion blur (column 4), and curved glass (columns 2, 5). In many instances, these segmentation methods wrongly classify free space as glass. Further, the robot's motion could cause blurring in some sets of frames, which leads to inaccurate detections.  During navigation, such errors cause the robot's planner to freeze or collide. TOPGN accurately extrapolates (red lines) the transparent obstacle in most scenarios. The pink circle denotes the robot's position and the green polygons represent the camera's field of view (FOV). We also depict some failure cases (columns 5 and 6) with highly non-convex and tilted transparent obstacles.}}}
    \label{fig:various-env}
    \vspace{-10pt}
\end{figure*}

% MIoU - Mean intersection over union
% PA - Pixel accuracy
% $F_1$ score
% MAE - Mean absolute error

\begin{figure}[t]
    \centering
    \includegraphics[width=\columnwidth]{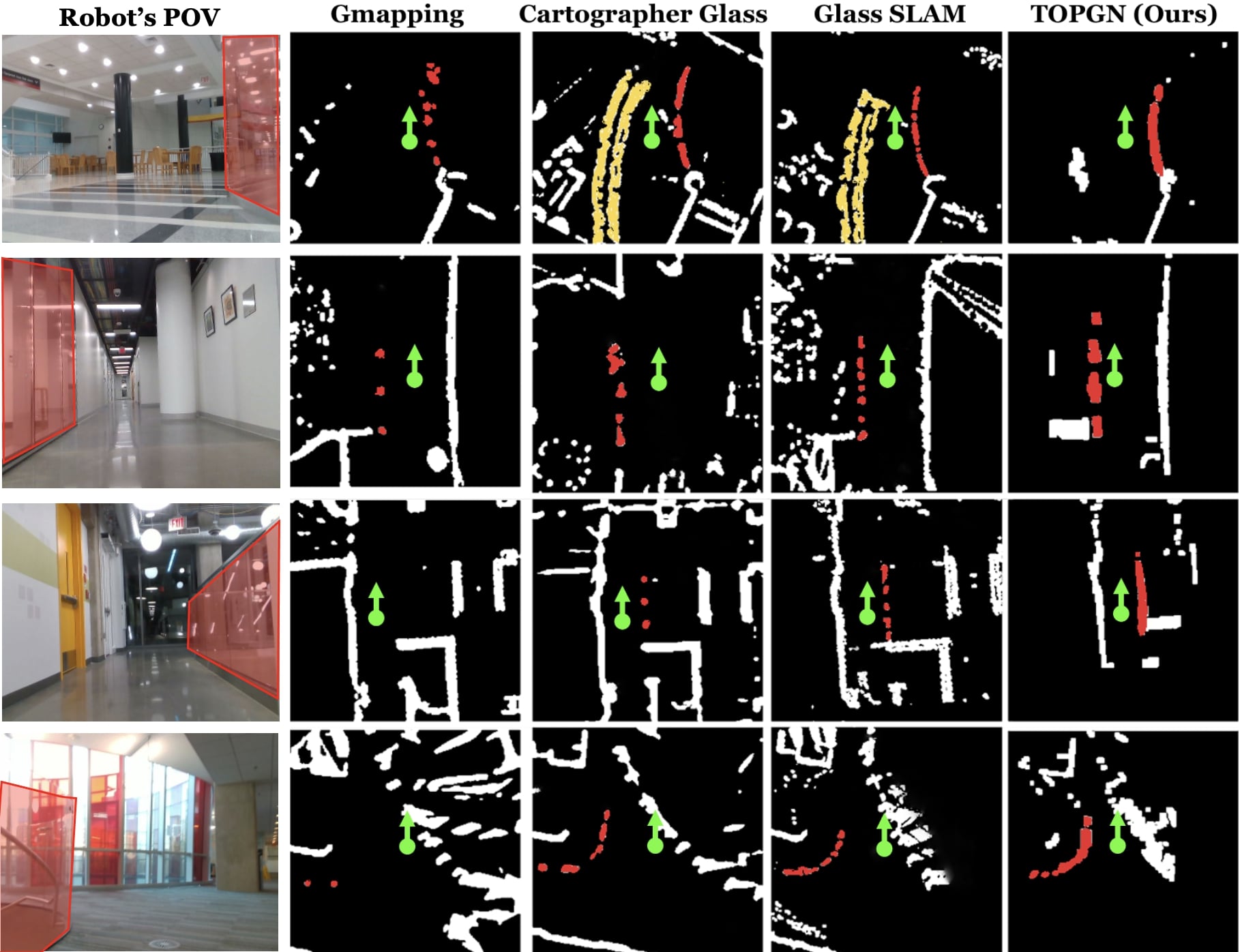}
    \caption{\small{Transparent obstacle mapping outputs from the lidar-based mapping methods: Gmapping  \cite{grisetti2007gmapping}; Cartographer Glass \cite{lasitha2022cartographer_glass} and Glass SLAM \cite{glass-slam} compared to TOPGN (ours). Red regions indicate the transparent objects detected from each method on a local map where the robot is located at the center (green color arrow). Artifacts due to erroneous glass detection from SLAM methods are marked in yellow color. The perspective from the robot as it maps the environment is shown in the first column for context, and the transparent obstacles are marked in red.}}
    \label{fig:map-comp}
\end{figure}

\textbf{Metrics}: To evaluate and compare various methods in terms of transparent obstacle detection, we use four widely adopted metrics: 
% 1. Mean Intersection over Union (mIoU), 2. Pixel Accuracy (PA), 3. $F_1$ score, and 4. Mean Absolute Error (MAE). 
\begin{itemize}
\item Mean IoU (mIoU): It is the area of overlap between the segmentation output and the ground truth divided by the area of union between the predicted segmentation and the ground truth.

\item Pixel Accuracy (PA): It denotes the percent of pixels that are accurately classified in the image. It is calculated as,

\begin{equation}
    PA = \frac{\sum_{\forall j} P_{jj}}{\sum_{\forall j} T_j}.
\end{equation}

Here, $P_{jj}$ is the number of pixels predicted to be in class j, and belonging to class j, and $T_j$ denotes the total number of pixels labeled as class j. 

\item $F_1$ score: It is the harmonic mean of the average precision and average recall calculated as,

\begin{equation}
    F_1 = \frac{2 \cdot Precision \times Recall}{Precision + Recall}.
\end{equation}

\item Mean Absolute Error (MAE): It is a measure of errors between paired observations (predictions and ground truth). It is calculated as,

\begin{equation}
    MAE = \frac{\sum^{tot}_{j = 1}|y_j - x_j|}{tot}.
\end{equation}

Here, $tot$ represents the total number of data points (e.g. pixels), and $y_j, x_j$ represent the prediction and the ground truth of the data point respectively.
\end{itemize}

Additionally, to evaluate a method's applicability for real-time robot navigation, we also measure its inference rate. We define inference rate as the inverse of the time taken by a method to compute a segmentation mask or update a map based on local observations. 

\textbf{Navigation Evaluations}: To assess the benefits of TOPGN for robot navigation, we compare the navigation success rate when using: 1. vanilla planner using 2D lidar scans \cite{fox1997dwa}, 2. planner using the segmentation, 3. planner using SLAM methods, and 4. planner using our proposed extrapolation method (section \ref{sec:extrapolation}). The success rate is defined as the number of times the robot reaches its goal without colliding with any obstacle and freezing or halting forever in 10 trials. For the evaluation, we utilize the outputs (which contain transparent obstacle locations w.r.t the robot) of all these methods to evaluate the robot's trajectories using \cite{fox1997dwa}. The planner \cite{fox1997dwa} chooses trajectories that do not intersect with any transparent obstacle, and also utilizes 2D laser scans to detect and avoid opaque obstacles. 
We evaluate it in five scenarios with transparent obstacles: 1. Indoor setting with a glass door (one open, and one closed) and strong backlighting, 2. Indoor setting with reflective surfaces and strong multiple lights, 3. Outdoor setting with low light, 4. Indoor setting with a curved acrylic transparent obstacle, 5. Indoor lab setting with mirrors, arbitrarily shaped transparent PVC, and acrylic sheets.

\begin{figure}[t]
    \centering
    \includegraphics[width=\columnwidth]{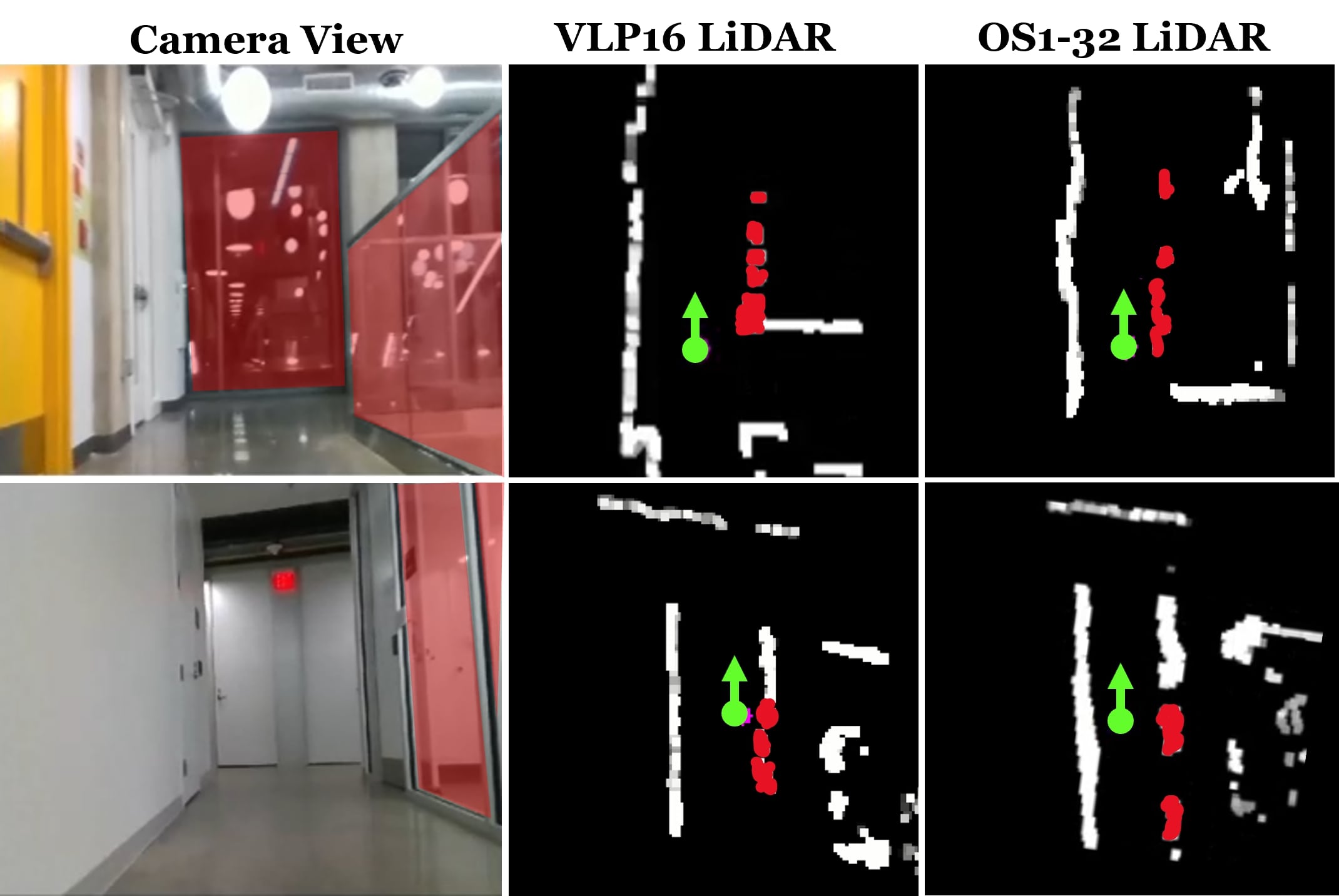}
    \caption{\small{\rev{TOPGN's transparent obstacle detection performance comparison for two different 3D LiDARs with different channel resolution: 1. Velodyne VLP16 has 16 verticle channels; 2. OS1-32 Lidar has 32 verticle channels. We observe that our method demonstrates comparable detection performance for different 3D LiDAR sensors that have different channel resolutions. The robot's camera views are presented on the left to help understand the transparent regions (marked in red) on the cost maps.} }}
    \label{fig:lidar-comparison}
\end{figure}

\begin{figure}[t]
    \centering
    \includegraphics[width=0.75\columnwidth]{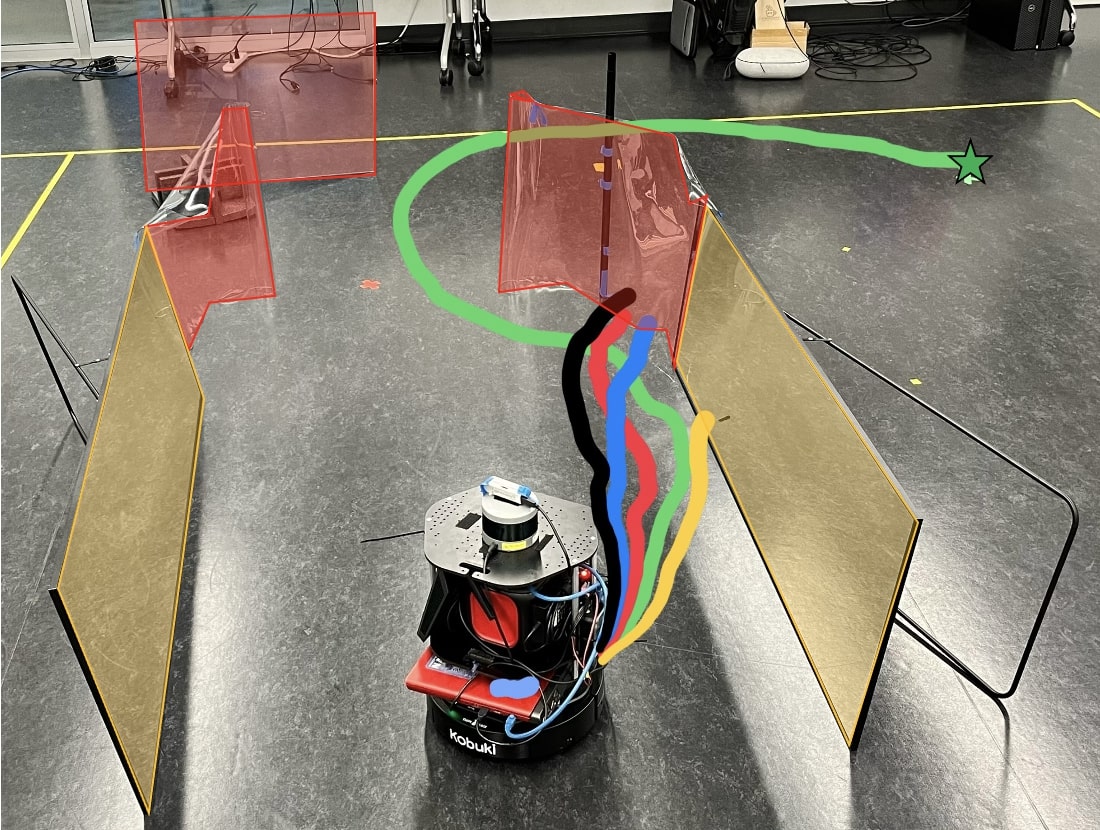}
    \caption{\small{Scenario 5 that requires the robot to navigate in the presence of mirrors (highlighted in yellow), arbitrarily-shaped transparent PVC and acrylic sheets (highlighted in red). We observe that TOPGN (green) navigates the robot by avoiding all transparent obstacles. Semantic segmentation methods (in blue \cite{gdnet} and red \cite{translab}), and using 2D lidar scans \cite{fox1997dwa} cannot detect transparent PVC, leading to collisions. Glass-SLAM \cite{glass-slam} (in yellow), due to slow map update rates, directly collides with the mirror.}}
    \label{fig:lab-scenario}
\end{figure}

\begin{table}[t]
\centering
\resizebox{\columnwidth}{!}{
 \begin{tabular}{l | c | c | c | c | c} 
 \hline
 Methods & MIoU $\uparrow$ & PA $\uparrow$ & $F_1\uparrow$ & MAE $\downarrow$ & Infer. Rate$\uparrow$ \\ [0.5ex] 
 \hline\hline
 TransLab \cite{translab}  & 0.564 & 0.809 & 0.554 & 0.190 & 1.61  \\ 
 Mirror-Net \cite{mirrorNet} & 0.412 & 0.631 & 0.701 & 0.369 & 0.40 \\
 GDNet \cite{gdnet} & 0.502 & 0.659 & 0.627 & 0.341 & 9.32  \\
 RGBT Segmnt. \cite{rgbt-glass-segmentation} & 0.215 & 0.724 & 0.385 & 0.275 & 0.45  \\
 \hline\hline
 Gmapping \cite{grisetti2007gmapping}                       & 0.497 & 0.623 & 0.484 & 0.368 & 0.23\\
 Glass-SLAM \cite{glass-slam}                               & 0.841 & 0.976 & 0.796 & 0.013 & 0.36 \\ 
 Cartographer Glass \cite{lasitha2022cartographer_glass}    & 0.813 & 0.965 & 0.824 & 0.025 & 7.34 \\ 
\hline\hline
 TOPGN w VLP16 Lidar (ours) & \textbf{0.872} & \textbf{0.992} & \textbf{0.929} & \textbf{0.008} & \textbf{45.67} (on CPU)\\ [1ex] 
  \rev{TOPGN w OS1-32 Lidar (ours)} & \rev{\textbf{0.881}} & \rev{0.928} & \rev{0.893} & \rev{0.011} & \rev{38.24 (on CPU)}\\ [1ex]
 \hline
 \end{tabular} }
 \caption{\small{Transparent obstacle detection performance comparisons for semantic segmentation and lidar-based SLAM methods against TOPGN (Ours) using various metrics.   }
}
\label{tab:perception-comparison}
\vspace{-10pt}
\end{table}

%% --
\begin{table}[t]
\resizebox{\columnwidth}{!}{
\begin{tabular}{c c c c c c c} 
% \hline
\toprule
\textbf{Metrics} & \textbf{Method} &  \textbf{Scn. 1} & \textbf{Scn. 2} & \textbf{Scn. 3} & \textbf{Scn. 4} &  \textbf{Scn. 5}\\  
[0.5ex] 
\hline
\multirow{6}{*}{\rotatebox[origin=c]{0}{\makecell{\textbf{Success Rate (\%) $\uparrow$ }}}}
 & DWA Planner \cite{fox1997dwa}            & 0 & 0 & 20 & 0 & 0 \\
 & TransLab \cite{translab}             & 20 & 50 & 0 & 30 & 0\\
  % & Mirror-Net \cite{mirrorNet}  & - & - & - & - \\
 & GDNet \cite{gdnet}                   & 40 & 20 & 0 & 20 & 0\\
 & Gmapping \cite{grisetti2007gmapping} & 0 & 10 & 20 & 0 & 0\\
& Glass-SLAM \cite{glass-slam}          & 10 & 30 & 40 & 30 & 0\\
  % & Cartographer Glass \cite{lasitha2022cartographer_glass} & - & - & - & - \\
 & TOPGN (ours)                         & \textbf{70} & \textbf{100} & \textbf{100} & \textbf{80} &  \textbf{60}\\
\bottomrule
\end{tabular}
}
\caption{\small{Navigation performance comparison for four scenarios that include transparent obstacles with different sizes, shapes, and under challenging lighting conditions.}}
\label{tab:navigation-comparison}
\end{table}

\subsection{Analysis}
% SS methods lead to freezing, and lot of undesirable behaviors.
%  Our false +ves will be very low.
% SS methods do not account for motion-blur. This causes lots of errors. 
% Shiny floors also cause a problem. 
Table \ref{tab:perception-comparison} shows the transparent obstacle detection performance of the segmentation-based, and SLAM-based methods. \blue{We choose to compare the performance of methods that use two different modalities to highlight the severe limitations of image-based segmentation.} Of the segmentation methods, TransLab \cite{translab} performs the best, followed by GDNet \cite{gdnet}, MirrorNet \cite{mirrorNet}, and RGB-T segmentation's \cite{rgb-thermal} RGB-only model. All segmentation methods perform well in environments that are well-lit, provide ample context to indicate the presence of transparent obstacles, and contain planar/uncurved glass. When these conditions are not satisfied, their performance tends to deteriorate. Such scenarios are shown in Fig. \ref{fig:various-env} with the input RGB image, the corresponding ground truth (GT), and the outputs of three segmentation methods, and our method's linear extrapolation results in these cases. We observe that in many instances these segmentation methods incorrectly mark free space as a transparent obstacle (in yellow). This predominantly occurs in the parts of the input image that have bright lights or reflections from shiny surfaces.

Such methods also struggle in low-light conditions (Fig. \ref{fig:various-env} columns 3 and 4), and frames with motion blur (Fig. \ref{fig:various-env} column 4), which could occur in images captured from a robot. In certain cases with glass doors (when one is open, and the other is closed), these methods predict the free space to also contain glass (see Fig. \ref{fig:various-env} column 1) similar to the observations in \cite{rich-context-aggregation}.

TOPGN, on the other hand, accurately extrapolates transparent obstacles linearly in the scenes in columns 1-3. For column 1, TOPGN extrapolates the closed door accurately. Although the line extended to cover the free space in the figure, as the robot moves and the line's position and orientation change to open up the free space in most cases during navigation. For curved glass (columns 2 and 5), the line is extrapolated tangential to it. The scenario in column 5 is especially challenging as it contains two glass components: a curved glass wall, and an open door. TOPGN accurately extrapolates the curved wall. We observe that TOPGN does not extrapolate the glass in column 4. This is because it is already detected as an obstacle (in grey) due to the strong reflections from the dust settled on the glass. Lidar-based detection benefits from such real-world phenomena, and can detect obstacles regardless of the robot's motion. We discuss the scenario in columns 5 and 6 further under failure cases.   

SLAM-based methods \cite{glass-slam,lasitha2022cartographer_glass} are not affected by low-light, bright reflections, or motion blur, and can detect transparent obstacles accurately as presented in Fig. \ref{fig:map-comp}. Gmapping, which is not formulated to detect transparent obstacles, exhibits low detection accuracy (see Fig. \ref{fig:map-comp} column 2). Its accuracy springs from detecting some portions of glass based on the opaque railings around it. It is used for comparison to highlight the difficulty in detecting transparent obstacles in general. Glass-SLAM \cite{glass-slam}, and Cartographer Glass \cite{lasitha2022cartographer_glass} accurately detect glass of various shapes and conditions. However, the presence of dynamic obstacles could cause artifacts such as a trail of their positions to be recorded and marked as obstacles on the map as shown from yellow regions in row 1 of Fig. \ref{fig:map-comp}. This occurs because they are configured to record \textit{all} reflected points (of various intensities) to detect glass. Configurations that prevent such artifacts lead to poor glass detection. TOPGN's formulation, when applied for mapping robustly detects transparent obstacles in all these cases and is not affected by environmental conditions. Importantly, TOPGN's TON isolation based on the condition in equation \ref{eqn:TO-condition} ensures that other obstacles (e.g. dynamic pedestrians) are not detected as transparent obstacles. This is because the intensity condition in equation \ref{eqn:TO-condition} cannot be satisfied by opaque obstacles such as humans. For collision avoidance in unknown/unmapped environments, equation \ref{eqn:nav-grid-map} ensures that no other obstacle is missed as it combines the extrapolated glass with all the obstacles in $I^t_{low}, I^t_{mid},$ and $I^t_{high}$. For mapping, equation \ref{eqn:mapping} ensures that all obstacles are detected at each instant, and artifacts from dynamic obstacles are not added to the map. Equation 8 combines the middle-intensity map ($I^t_{mid}$ that contains all opaque obstacles) with a transformed $\mathcal{G}^{t-k}$ (which contains an instance of a transparent obstacle).

\textbf{Inference Rate}: For real-world implementation, perception methods must possess a high inference rate. Comparing the inference rates of these methods, only GDNet \cite{gdnet} executes $\sim 9$ Hz, making it suitable for real-time navigation. All other segmentation methods have a high computational overhead and do not execute in real-time on a mobile GPU. Glass-SLAM requires $\sim 3-4$ seconds to update its map, making it unsuitable for real-time navigation. Cartographer Glass \cite{lasitha2022cartographer_glass} maps faster, but in some cases may not update the map in time to avoid obstacles. TOPGN has a superior inference rate for both extrapolation and mapping and executes at $\sim 50$ Hz on a mobile laptop CPU, enabling real-time navigation.

\begin{figure}[t]
    \centering
    \includegraphics[width=\columnwidth]{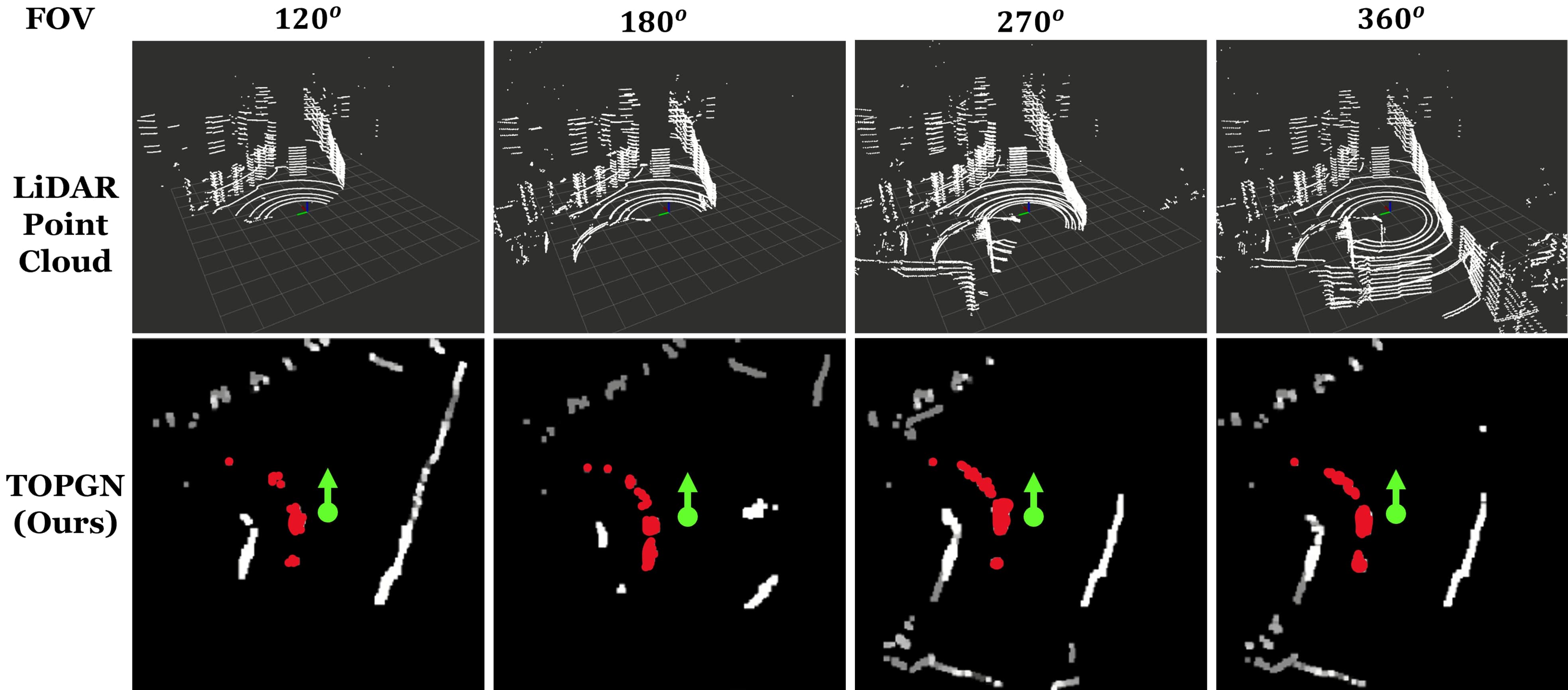}
    \caption{\small{\rev{TOPGN's performance comparison for different 3D lidar horizontal field of views (FOV). We observe that TOPGN's transparent obstacle detection performance is almost consistent until the lidar's horizontal FOV reduces from $360^{\circ}$ to $180^{\circ}$. However, TOPGN's performance degrades when the FOV is less than $180^{\circ}$. Red regions indicate the transparent obstacle regions detected by TOPGN.} }}
    \label{fig:fov-comparison}
\end{figure}

% Navigation comparisons
\textbf{Navigation Success Rate}: In terms of navigation success rate, we observe that using TOPGN's extrapolation greatly improves the robot's rate of reaching the goal. The planner when only using 2D laser scans \cite{fox1997dwa} to navigate and Gmapping, could only detect opaque obstacles, which led to collisions with transparent objects in all the trials in scenarios 1, 2, and 4. The planner when augmented with segmentation methods was able to successfully reach its goal in some trials. However, in most trials, incorrectly classifying free space as glass led to freezing issues, where the robot oscillates or halts indefinitely. We chose to evaluate TransLab \cite{translab}, and Glass-SLAM \cite{glass-slam} despite their low inference rates due to their reasonably high detection accuracy. However, their low inference rate led to severe oscillations when used with the planner. In scenario 1 in Fig. \ref{fig:cover_image}, Translab mistook the open door to be closed causing the robot to freeze before entering the room. GDNet, on the other hand, was more accurate in this scenario (also depicted in Fig. \ref{fig:various-env} column 1, row 5). Both TransLab \cite{translab}, and GDNet \cite{gdnet} experienced performance degradation in scenarios 2 and 3 in Fig. \ref{fig:cover_image} due to bright and low light conditions, respectively. In scenario 4, although the RGB-based methods managed to detect glass to a certain extent, the low field-of-view of the camera led to them not viewing the curved glass in many trials. Additionally, misclassifying free space as glass (see Fig. \ref{fig:various-env} column 2) also caused freezing.   

TOPGN, in all these scenarios, was able to isolate the TONs in the vicinity, and linearly extrapolate the transparent obstacle. Since the grids beyond the extrapolated line are considered as obstacles, the planner chose trajectories avoiding this region and averted collisions. Due to the narrow passages, and high curvature of glass in scenarios 1 and 4 respectively, our method led to collisions in some cases. In scenario 1 especially (also depicted in Fig. \ref{fig:various-env} column 1), TOPGN's linear extrapolation caused the robot to freeze in some cases due to the narrow passage. However, in most cases, the extrapolation aided in avoiding the closed door and reaching the goal.  

% Lab scenario
The results of our experiments in scenario 5 are shown in Fig.\ref{fig:lab-scenario}. Our method (in green) is able to navigate around the mirrors, and the transparent PVC. Segmentation methods such as GDNet \cite{gdnet} (in blue), and TransLab \cite{translab} (in red) cannot accurately detect the transparent PVC and collide with it. The reflections from the floor also confuse the segmentation and navigation. In many trials, these methods lead to freezing as most of the image is classified as a transparent obstacle. Glass-SLAM \cite{glass-slam} (in yellow) directly collides with the mirror due to its slow map update rate, and using 2D laser scans (in black) avoids the mirror but collides with the PVC similar to segmentation methods. This scenario highlights our method's linear extrapolation's capabilities in the presence of transparent obstacles with various shapes and materials.

\rev{\textbf{Compatibility with Different Lidar Sensors:} We evaluate our method's transparent obstacle detection capabilities for two different 3D lidar sensors in Fig. \ref{fig:lidar-comparison} and Table \ref{tab:perception-comparison}. Even though our experiments are conducted using a VLP16 Velodyne lidar with 16-channel vertical resolution, our methods demonstrate similar detection performance with an Ouster OS1-32 lidar (32-channel resolution) without any changes to the algorithmic parameters or threshold values. However, the 32-channel Ouster lidar results in a relatively lower inference rate due to the processing of a significantly higher dimensional point cloud compared to the 16-channel Velodyne lidar.} 

\rev{Our method cannot be used with any 2D lidar sensors since it requires multi-level intensities to isolate the transparent object regions. Moreover, 3D lidars with significantly low vertical resolution (e.g., less than 6-8 channels) might not be suitable for our approach due to the requirement of reasonable point clouds in each layer of the multi-layer intensity maps. However, we noticed that commonly available 3D lidars have at least 16 vertical channels.  }

\rev{\textbf{Effect of the Lidar's Horizontal FOV:} We observe that TOPGN's transparent obstacle detection demonstrates consistent qualitative performance for lidar point clouds captures from $360^{\circ}$ to $180^{\circ}$ sensor Field-of-Views (FOV) inf Fig. \ref{fig:fov-comparison}. However, the performance degrades beyond $180^{\circ}$ FOV. Hence, TOPGN can be used with relatively low FOV lidar sensors instead of the $360^{\circ}$ lidars. }

\textbf{Failure Cases:} We observe that in cases where a transparent obstacle has a sharp non-convex shape (Fig. \ref{fig:various-env} column 5 where the glass door is extrapolated incorrectly), or thin (almost 2-dimensional when viewed from the lidar) the lidar may not consistently be able to detect points that could avoid collisions at a future time instant. In such cases, the robot could get stuck or collide. However, we note that all existing methods also fail in such cases (e.g. see column 5 in Fig. \ref{fig:various-env}). Additionally, when transparent objects are highly inclined, the reflected laser points may not lie in $I^t_{mid}$ for it to be extrapolated.  We depict this scenario in Fig. \ref{fig:various-env} column 6. We observe that the detection depends on the angle of inclination, material and thickness of the transparent object.  

In our proof, the small TON is enclosed by a circle and the tangent line is constructed. This approximates the transparent object as convex at the TON. This holds true in real-world transparent obstacles and the way they are constructed. In extreme corner cases, our way of linear extrapolation may not be enough, and a higher-order curve could be more suitable. In cases with curved glass, we observe some gaps between the TONs detected in subsequent time steps. However, these gaps can be closed up by inflating the blobs/neighborhoods based on the robot's radius.

\section{Conclusions, Limitations, and Future Work}
We present TOPGN, a method to isolate instances of transparent obstacle neighborhoods in a multi-layer grid map representation containing point cloud intensities. Our method then extrapolates the regions that could potentially contain transparent obstacles, which when integrated with a navigation scheme, leads to superior rates of successfully reaching the robot's goal. We showed how TOPGN's transparent obstacle isolation can be useful in mapping applications. Our method is unaffected by adverse conditions such as harsh environmental lighting, reflections, motion blur from the robot, etc. 

Our method has a few limitations. In general, lidars have a circular blind spot around them and could miss obstacles closer than $1.5$m. Unlike RGB-based methods, lidar-based methods require several time instances to detect/map the true shape of transparent obstacles. We observe that in scenarios with sharply curved or tilted glass (e.g. curved glass with an open glass door), we may not be able to obtain sufficient TONs to detect and extrapolate the obstacle consistently. Our method is dependent on the specifications and quality of the lidar sensor. Therefore, using a lidar with a low vertical FOV would affect the glass detection accuracy. \blue{The transparent obstacles are implicitly assumed to be smooth or well approximated by the linear extrapolation. Similar to prior mapping methods that use lidar, our approach can be affected by odometry/localization errors when mapping transparent obstacles. This could lead to the robot freezing/halting indefinitely in narrow passages.}

In the future, we would like to address these limitations and investigate methods to obtain the same capabilities using a low-FOV lidar or time-of-flight sensor.

\bibliographystyle{IEEEtran}
\bibliography{References}

\begin{thebibliography}{10}
\providecommand{\url}[1]{#1}
\csname url@rmstyle\endcsname
\providecommand{\newblock}{\relax}
\providecommand{\bibinfo}[2]{#2}
\providecommand\BIBentrySTDinterwordspacing{\spaceskip=0pt\relax}
\providecommand\BIBentryALTinterwordstretchfactor{4}
\providecommand\BIBentryALTinterwordspacing{\spaceskip=\fontdimen2\font plus
\BIBentryALTinterwordstretchfactor\fontdimen3\font minus \fontdimen4\font\relax}
\providecommand\BIBforeignlanguage[2]{{%
\expandafter\ifx\csname l@#1\endcsname\relax
\typeout{** WARNING: IEEEtran.bst: No hyphenation pattern has been}%
\typeout{** loaded for the language `#1'. Using the pattern for}%
\typeout{** the default language instead.}%
\else
\language=\csname l@#1\endcsname
\fi
#2}}

\bibitem{fox1997dwa}
D.~Fox, W.~Burgard, and S.~Thrun, ``The dynamic window approach to collision avoidance,'' \emph{IEEE Robotics \& Automation Magazine}, vol.~4, no.~1, pp. 23--33, 1997.

\bibitem{gdnet}
H.~Mei, X.~Yang, Y.~Wang, Y.~Liu, S.~He, Q.~Zhang, X.~Wei, and R.~W. Lau, ``Don't hit me! glass detection in real-world scenes,'' in \emph{IEEE/CVF Conference on Computer Vision and Pattern Recognition (CVPR)}, June 2020.

\bibitem{translab}
E.~Xie, W.~Wang, W.~Wang, M.~Ding, C.~Shen, and P.~Luo, ``Segmenting transparent objects in the wild,'' in \emph{Computer Vision -- ECCV 2020}, A.~Vedaldi, H.~Bischof, T.~Brox, and J.-M. Frahm, Eds.\hskip 1em plus 0.5em minus 0.4em\relax Cham: Springer International Publishing, 2020, pp. 696--711.

\bibitem{glass-slam}
\BIBentryALTinterwordspacing
X.~Wang and J.~Wang, ``Detecting glass in simultaneous localisation and mapping,'' \emph{Robotics and Autonomous Systems}, vol.~88, pp. 97--103, 2017. [Online]. Available: \url{https://www.sciencedirect.com/science/article/pii/S0921889015302670}
\BIBentrySTDinterwordspacing

\bibitem{visagge}
P.~Foster, Z.~Sun, J.~J. Park, and B.~Kuipers, ``Visagge: Visible angle grid for glass environments,'' in \emph{2013 IEEE International Conference on Robotics and Automation}, 2013, pp. 2213--2220.

\bibitem{lasitha2022cartographer_glass}
L.~Weerakoon, G.~S. Herr, J.~Blunt, M.~Yu, and N.~Chopra, ``Cartographer\_glass: 2d graph slam framework using lidar for glass environments,'' \emph{arXiv preprint arXiv:2212.08633}, 2022.

\bibitem{depth-aware-glass-surface-detection}
J.~{Lin}, Y.~{Hei Yeung}, and R.~W.~H. {Lau}, ``{Depth-aware Glass Surface Detection with Cross-modal Context Mining},'' \emph{arXiv e-prints}, p. arXiv:2206.11250, June 2022.

\bibitem{rgb-d-depth-completion}
\BIBentryALTinterwordspacing
L.~Zhu, A.~Mousavian, Y.~Xiang, H.~Mazhar, J.~Eenbergen, S.~Debnath, and D.~Fox, ``Rgb-d local implicit function for depth completion of transparent objects,'' in \emph{2021 IEEE/CVF Conference on Computer Vision and Pattern Recognition (CVPR)}.\hskip 1em plus 0.5em minus 0.4em\relax Los Alamitos, CA, USA: IEEE Computer Society, jun 2021, pp. 4647--4656. [Online]. Available: \url{https://doi.ieeecomputersociety.org/10.1109/CVPR46437.2021.00462}
\BIBentrySTDinterwordspacing

\bibitem{rgb-thermal}
D.~Huo, J.~Wang, Y.~Qian, and Y.-H. Yang, ``Glass segmentation with rgb-thermal image pairs,'' \emph{IEEE Transactions on Image Processing}, vol.~32, pp. 1911--1926, 2023.

\bibitem{ultrasonic-and-rgbd}
Z.~Huang, K.~Wang, K.~Yang, R.~Cheng, and J.~Bai, ``Glass detection and recognition based on the fusion of ultrasonic sensor and rgb-d sensor for the visually impaired,'' 10 2018, p.~14.

\bibitem{mim-arxiv}
A.~{Jagan Sathyamoorthy}, K.~{Weerakoon}, M.~{Elnoor}, and D.~{Manocha}, ``{Using Lidar Intensity for Robot Navigation},'' \emph{arXiv e-prints}, p. arXiv:2309.07014, Sept. 2023.

\bibitem{foster2023reflectance}
P.~Foster, C.~Johnson, and B.~Kuipers, ``The reflectance field map: Mapping glass and specular surfaces in dynamic environments,'' in \emph{2023 IEEE International Conference on Robotics and Automation (ICRA)}.\hskip 1em plus 0.5em minus 0.4em\relax IEEE, 2023, pp. 8393--8399.

\bibitem{mip-maps}
J.~Ewins, M.~Waller, M.~White, and P.~Lister, ``Mip-map level selection for texture mapping,'' \emph{IEEE Transactions on Visualization and Computer Graphics}, vol.~4, no.~4, pp. 317--329, 1998.

\bibitem{deng2018multi}
Q.~Deng, S.~Wu, J.~Wen, and Y.~Xu, ``Multi-level image representation for large-scale image-based instance retrieval,'' \emph{CAAI Transactions on Intelligence Technology}, vol.~3, no.~1, pp. 33--39, 2018.

\bibitem{723367}
F.~Meyer, A.~Averbuch, J.~Stromberg, and R.~Coifman, ``Multi-layered image representation: application to image compression,'' in \emph{Proceedings 1998 International Conference on Image Processing. ICIP98 (Cat. No.98CB36269)}, vol.~2, 1998, pp. 292--296 vol.2.

\bibitem{hierarchical-occlusion-maps}
\BIBentryALTinterwordspacing
H.~Zhang, D.~Manocha, T.~Hudson, and K.~E. Hoff, ``Visibility culling using hierarchical occlusion maps,'' \emph{Proceedings of the 24th annual conference on Computer graphics and interactive techniques}, 1997. [Online]. Available: \url{https://api.semanticscholar.org/CorpusID:496617}
\BIBentrySTDinterwordspacing

\bibitem{fusing-sonar}
H.~Wei, X.~Li, Y.~Shi, B.~You, and Y.~Xu, ``Fusing sonars and lrf data to glass detection for robotics navigation,'' in \emph{2018 IEEE International Conference on Robotics and Biomimetics (ROBIO)}, 2018, pp. 826--831.

\bibitem{tibebu2021lidar_glass_detection}
H.~Tibebu, J.~Roche, V.~De~Silva, and A.~Kondoz, ``Lidar-based glass detection for improved occupancy grid mapping,'' \emph{Sensors}, vol.~21, no.~7, p. 2263, 2021.

\bibitem{large-field-contextual}
H.~Mei, X.~Yang, L.~Yu, Q.~Zhang, X.~Wei, and R.~W.~H. Lau, ``Large-field contextual feature learning for glass detection,'' \emph{IEEE Transactions on Pattern Analysis and Machine Intelligence}, vol.~45, no.~3, pp. 3329--3346, 2023.

\bibitem{rich-context-aggregation}
J.~Lin, Z.~He, and R.~W. Lau, ``Rich context aggregation with reflection prior for glass surface detection,'' in \emph{2021 IEEE/CVF Conference on Computer Vision and Pattern Recognition (CVPR)}, 2021, pp. 13\,410--13\,419.

\bibitem{corner-case-stereo}
Z.~Wu, S.~Su, Q.~Chen, and R.~Fan, ``Transparent objects: A corner case in stereo matching,'' in \emph{2023 IEEE International Conference on Robotics and Automation (ICRA)}, 2023, pp. 12\,353--12\,359.

\bibitem{6dof-pose-estimation}
\BIBentryALTinterwordspacing
C.~Xu, J.~Chen, M.~Yao, J.~Zhou, L.~Zhang, and Y.~Liu, ``6dof pose estimation of transparent object from a single rgb-d image,'' \emph{Sensors}, vol.~20, no.~23, 2020. [Online]. Available: \url{https://www.mdpi.com/1424-8220/20/23/6790}
\BIBentrySTDinterwordspacing

\bibitem{keypose}
X.~Liu, R.~Jonschkowski, A.~Angelova, and K.~Konolige, ``Keypose: Multi-view 3d labeling and keypoint estimation for transparent objects,'' in \emph{2020 IEEE/CVF Conference on Computer Vision and Pattern Recognition (CVPR 2020)}, 2020.

\bibitem{clear-grasp}
S.~Sajjan, M.~Moore, M.~Pan, G.~Nagaraja, J.~Lee, A.~Zeng, and S.~Song, ``Clear grasp: 3d shape estimation of transparent objects for manipulation,'' in \emph{2020 IEEE International Conference on Robotics and Automation (ICRA)}, 2020, pp. 3634--3642.

\bibitem{multimodal-transfer-learning}
T.~Weng, A.~Pallankize, Y.~Tang, O.~Kroemer, and D.~Held, ``Multi-modal transfer learning for grasping transparent and specular objects,'' \emph{IEEE Robotics and Automation Letters}, vol.~5, no.~3, pp. 3791--3798, 2020.

\bibitem{dexnerf}
J.~Ichnowski*, Y.~Avigal*, J.~Kerr, and K.~Goldberg, ``{Dex-NeRF}: Using a neural radiance field to grasp transparent objects,'' in \emph{Conference on Robot Learning (CoRL)}, 2020.

\bibitem{monograspnet}
G.~Zhai, D.~Huang, S.-C. Wu, H.~Jung, Y.~Di, F.~Manhardt, F.~Tombari, N.~Navab, and B.~Busam, ``Monograspnet: 6-dof grasping with a single rgb image,'' in \emph{2023 IEEE International Conference on Robotics and Automation (ICRA)}, 2023, pp. 1708--1714.

\bibitem{frozone}
A.~J. Sathyamoorthy, U.~Patel, T.~Guan, and D.~Manocha, ``Frozone: Freezing-free, pedestrian-friendly navigation in human crowds,'' \emph{IEEE Robotics and Automation Letters}, vol.~5, no.~3, pp. 4352--4359, 2020.

\bibitem{qi2020building}
X.~Qi, W.~Wang, M.~Yuan, Y.~Wang, M.~Li, L.~Xue, and Y.~Sun, ``Building semantic grid maps for domestic robot navigation,'' \emph{International Journal of Advanced Robotic Systems}, vol.~17, no.~1, p. 1729881419900066, 2020.

\bibitem{tripathy2021care}
H.~K. Tripathy, S.~Mishra, H.~K. Thakkar, and D.~Rai, ``Care: A collision-aware mobile robot navigation in grid environment using improved breadth first search,'' \emph{Computers \& Electrical Engineering}, vol.~94, p. 107327, 2021.

\bibitem{de2017skimap}
D.~De~Gregorio and L.~Di~Stefano, ``Skimap: An efficient mapping framework for robot navigation,'' in \emph{2017 IEEE International Conference on Robotics and Automation (ICRA)}.\hskip 1em plus 0.5em minus 0.4em\relax IEEE, 2017, pp. 2569--2576.

\bibitem{ivasic2014multi}
M.~Ivasic-Kos, M.~Pobar, and I.~Ipsic, ``Multi-layered image representation for image interpretation,'' in \emph{Proceedings of the Third Workshop on Vision and Language}, 2014, pp. 115--117.

\bibitem{quad-trees}
G.~M. Hunter and K.~Steiglitz, ``Operations on images using quad trees,'' \emph{IEEE Transactions on Pattern Analysis and Machine Intelligence}, vol. PAMI-1, no.~2, pp. 145--153, 1979.

\bibitem{linear-quad-trees}
H.~Samet and M.~Tamminen, ``Computing geometric properties of images represented by linear quadtrees,'' \emph{IEEE Transactions on Pattern Analysis and Machine Intelligence}, vol. PAMI-7, no.~2, pp. 229--240, 1985.

\bibitem{time-elastic-band}
\BIBentryALTinterwordspacing
S.~Quinlan and O.~Khatib, ``Elastic bands: connecting path planning and control,'' \emph{[1993] Proceedings IEEE International Conference on Robotics and Automation}, pp. 802--807 vol.2, 1993. [Online]. Available: \url{https://api.semanticscholar.org/CorpusID:5641886}
\BIBentrySTDinterwordspacing

\bibitem{mirrorNet}
X.~Yang, H.~Mei, K.~Xu, X.~Wei, B.~Yin, and R.~W. Lau, ``Where is my mirror?'' in \emph{Proceedings of the IEEE/CVF International Conference on Computer Vision}, 2019, pp. 8809--8818.

\bibitem{grisetti2007gmapping}
G.~Grisetti, C.~Stachniss, and W.~Burgard, ``Improved techniques for grid mapping with rao-blackwellized particle filters,'' \emph{IEEE transactions on Robotics}, vol.~23, no.~1, pp. 34--46, 2007.

\bibitem{rgbt-glass-segmentation}
D.~Huo, J.~Wang, Y.~Qian, and Y.-H. Yang, ``Glass segmentation with rgb-thermal image pairs,'' \emph{IEEE Transactions on Image Processing}, vol.~32, pp. 1911--1926, 2023.

\end{thebibliography}

\clearpage
\section{Appendix}
In this document we provide additional definitions, and explanations that support and enhance the main manuscript.

\subsection{Integration with Planning}
We provide details on how a motion planner is integrated with our final navigation intensity map $I^t_{nav}$ (equation \ref{eqn:nav-grid-map}). At any time instant $t$, the values in the grids of $I^t_{nav}$ represent free space, opaque, or linearly extrapolated transparent obstacles. Therefore, $I^t_{nav}$ can be regarded as a \textit{cost map} containing the navigability costs (zero cost for free space, high positive costs for obstacles) of the robot's surroundings. Therefore, a motion planner could calculate costs for potential/candidate robot trajectories using $I^t_{nav}$, and select the trajectory with the lowest cost for execution by the robot. 

\subsubsection{Background on Motion Planners}
We integrate TOPGN with the Dynamic Window Approach (DWA)~\cite{fox1997dwa} to perform real-time navigation. In DWA, the robot's actions are represented as linear and angular velocity pairs $(v,\omega)$. $V_s = [[0, v_{max}], [-\omega_{max}, \omega_{max}]]$ is defined to be the space of all the possible robot velocities based on the maximum velocity limits $v_{max}$, and $\omega_{max}$. DWA formulates the following constrained velocity sets to compute dynamically feasible (i.e. executable by the robot) and collision-free velocities: (1) $V_d$, the dynamic window set contains the reachable velocities during the next $\Delta t$ time interval based on the robot's acceleration constraints;  (2) $V_a$, the admissible velocity space includes the collision-free velocities. The optimal velocity pair $(v^*,\omega^*)$ is then selected from the resulting velocity space $V_r = V_s \cap V_d \cap V_a$ by minimizing the following objective function:

\begin{equation}
\label{eq:dwa_obj_func}
    Q(v,\omega) = \sigma\big(\gamma_1 . head(.) + \gamma_2 . obs(.) + \gamma_3 . vel(.) \big).
\end{equation}

\begin{figure}[h]
    \centering
    \includegraphics[width=\columnwidth,height=7cm]{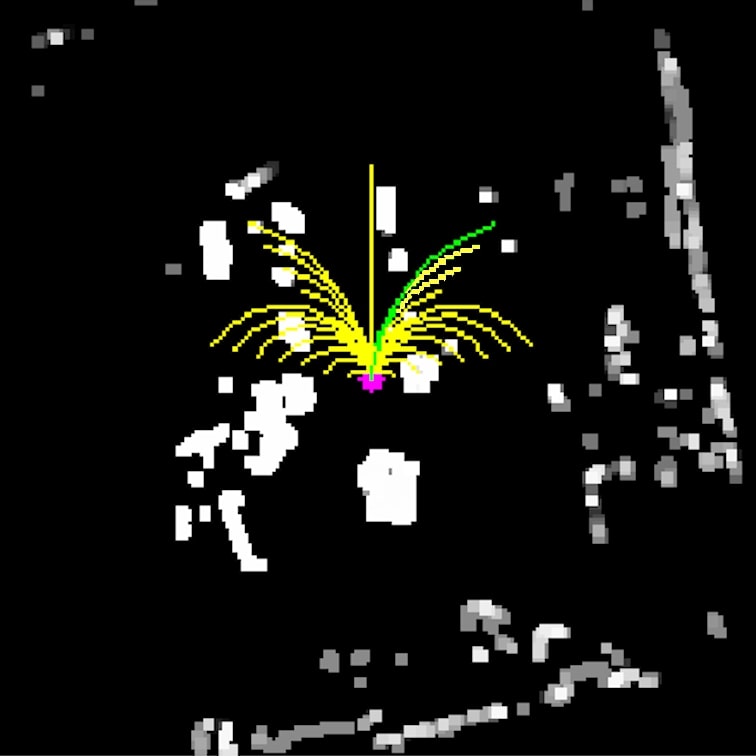}
    \caption{\small{Candidate trajectories (in yellow) shown relative to $I^t_{nav}$. The trajectory in green has the least-cost and is executed by the robot. The black regions represent free space, and grey/white regions represent opaque, and transparent obstacles.}}
    \label{fig:planning-integration}
\end{figure}

where $head(.)$, $obs(.)$, and $vel(.)$ are the cost functions \cite{fox1997dwa} to quantify a velocity pair's heading towards the goal, distance to the closest obstacle in the trajectory, and the forward velocity of the robot, respectively. $\sigma$ is a smoothing function and $\gamma_i, (i=1,2,3)$ are adjustable weights.

\subsubsection{Trajectory Cost Calculation}
For our integration, the $obs(.)$ cost is calculated by extrapolating the trajectories that each $(v, \omega)$ pair in $V_r$ would lead to within a time horizon $t_{hor}$. These trajectories are then transformed w.r.t $I^t_{nav}$ using the transformation between real-world dimensions and grid locations as mentioned in equation \ref{eqn:single-layer}. Then, the obstacle cost for the $k^{th}$ trajectory can be calculated as,

\begin{equation}
    \begin{split}
        traj^{I}_k = [(r_{1,k}, c_{1,k}), ... , (r_{j,k}, c_{j,k}), ... , (r_{lim,k}, c_{lim,k})] \\
        obs(traj^{I}_k) = \frac{1}{min(dist(\mathbf{O}^t, (r_{j,k}, c_{j,k})))}, \,\, j = \{1, 2, ..., lim\}.
    \end{split}
\end{equation}

Here, $\mathbf{O}^t$ is the set of obstacles (all grids that are not black in Fig. \ref{fig:planning-integration}) in $I^t_{nav}$. This cost calculation is repeated for every trajectory corresponding to a $(v, \omega) \in V_r$. The superimposed trajectories on $I^t_{nav}$ is shown in Fig.\ref{fig:planning-integration}. 

For integrating the planner with SLAM \cite{glass-slam}, the trajectories are obtained w.r.t the map generated by the SLAM method. For integrating with segmentation methods \cite{gdnet,translab}, the trajectories are transformed w.r.t the segmentation output image, and transparent obstacle costs are calculated. This is combined with the costs calculated by the planner \cite{fox1997dwa}.

\rev{\subsection{Parameters and default values} }

\rev{We summarize the details of the important parameters used in our method. We demonstrate that \textbf{without changing any of the parameters} our method can be used with different 3D lidar sensors that have different vertical channel resolutions (See comparison between Velodyne VLP16 and Ouster OS1-32 in Fig. \ref{fig:lidar-comparison}).  However, we would like to highlight that the performance of our method for a given lidar sensor can be improved if certain parameters are tuned accordingly. Hence, we discuss the effect of the parameters and threshold values used in our approach in the table below.}

\rev{Since our method's formulation depends on multi-layer intensity maps, it's mandatory to have enough vertical resolution in the point cloud to obtain at least 3 intensity layers. However, most commercially available 3D lidar sensors have at least 16 verticle channels or more which makes our algorithm compatible for general use. We further observed that even with enough vertical resolution, placing the lidar at a very low height can lead to difficulties in obtaining multi-layer intensities below the sensor's height level. Such issues can be mitigated simply by placing the sensor at a higher position. Hence, our overall approach can be easily deployed with any robot equipped with a 3D lidar. }

\begin{table*}[t]
\centering
% \resizebox{0.4\columnwidth}{!}{
\normalsize
\begin{tabular}{|p{1.5cm}|p{3cm}|p{0.75cm}|p{1.5cm}|p{8cm}|} 
\hline

\blue{\textbf{Parameter}} & \blue{\textbf{Description}} & \blue{\textbf{Range}} & \blue{\textbf{Default Value}} & \blue{\textbf{Notes}}   \\ [0.5ex] 
\hline
\blue{$n$}  & \blue{2D Intensity map width/height} & \blue{$\mathbb{Z}^+$} & \blue{200} &  \blue{Size $n$ of the intensity map indicates the sensing range of the lidar that we are interested in ($\sim$ 5-meter radius around the robot in our case). For a fixed sensing range in the real world, larger intensity maps lead to better spatial resolution in the real world (denoted by $s$). Extremely low-resolution intensity maps can close the narrow passages between the objects which makes it difficult to navigate a robot.} \\
\hline
\blue{$s$}  & \blue{Side length of the real-world square corresponding to a grid in a 2D map} & \blue{$\mathbb{Z}^+$} & \blue{0.05 meters} &  \blue{Smaller $s$ values indicate that the intensity map has a higher spatial resolution and vice versa. The effect of this parameter is correlated with the size of the intensity map.}\\
\hline
\blue{$h_{lid}$}  & \blue{Height of the lidar sensor location from the ground} & \blue{$\mathbb{Z}^+$} & \blue{0.48 meters} &  \blue{Height of the lidar sensor affects the TOPGN's performance only when the height is extremely low where even one additional intensity map layer cannot be defined below the sensor height level. Because our formulation requires at least one intensity map layer below the mid-intensity map (which is at the lidar's height level) to identify the TONs.} \\
\hline
\blue{$\Delta$}  & \blue{Height parameter that controls the height range considered for each intensity map layer} & \blue{$\mathbb{Z}^+$} & \blue{0.2 meters} &  \blue{Empirically chosen such that all the points with the highest intensity lie within $h_{lid} \pm \Delta$ when the lidar is $d_{thresh}$ meters away from a completely transparent obstacle (threshold distance to maintain with obstacles). $d_{thresh}$ is a robot-dependent parameter that can be obtained from the definition below. We believe that this is a simple calibration step one can perform if the hardware setup is significantly different from ours. Otherwise, our method will perform comparably with any similar robot setup without any changes to the parameters.} \\
\hline
\blue{$d_{thresh}$}  & \blue{Threshold distance to maintain with a transparent obstacle} & \blue{$\mathbb{R}^+$} & \blue{1 meter} &  \blue{$d_{thresh} = 2. r_{rob} + 0.5 $ meters. We observed that $d_{thresh} \sim 1 $ meters for our Turtlebot robot.  We defined this robot-dependant distance threshold after analyzing the point cloud intensity distribution for different transparent objects and by considering the safety clearance for a robot during navigation. Significantly smaller values can increase the risk of collisions with transparent obstacles.}\\
\hline
\blue{$m$}  & \blue{Side length of an ROI} & \blue{$\mathbb{Z}^+$} & \blue{100} & \blue{ROI is chosen such that it only covers the robot's nearby vicinity ($\sim 2.5$ meters). Higher values closer to $n$ could lead to artifacts (even though we are de-noising the ROI) in the final cost map (after the summation of multiple cost maps) since the ROI will include noisy objects further away from the robot.}    \\
\hline
\blue{$r_{rob}$}  & \blue{Radius of the robot} & \blue{$\mathbb{Z}^+$} & \blue{0.25} & \blue{We use a Turtlebot 2 robot with a 0.25 meter radius for our experiments.}  \\
\hline
\end{tabular}
% }

\caption{\rev{Details of the parameters used in our TOPGN approach.}}
\label{Tab:Results2}
% \vspace{-15pt}
\end{table*}

\end{document}